\newcommand{\D}{\mathcal{D}}
\newcommand{\X}{\mathbb{X}}
\newcommand{\Y}{\mathbb{Y}}
\newcommand{\R}{\mathbb{R}}
\newcommand{\prob}{\mathbf{P}}
\newcommand{\E}{\mathbf{E}}
\DeclareMathOperator*{\argmax}{argmax}
\theoremstyle{plain}
\newtheorem{theorem}{Theorem}
\newtheorem{lemma}{Lemma}
\theoremstyle{definition}
\title{Preferential Multi-Objective Bayesian Optimization}
\author{%
  Raul Astudillo$^1$, Kejun Li$^1$, Maegan Tucker$^2$, Chu Xin Cheng$^1$, Aaron D. Ames$^1$, Yisong Yue$^1$\\
  $^1$California Institute of Technology\\
  $^2$Georgia Institute of Technology
}
\begin{document}

\maketitle

\begin{abstract}
  Preferential Bayesian optimization (PBO) is a framework for optimizing a decision-maker's latent preferences over available design choices. While preferences often involve multiple conflicting objectives, existing work in PBO assumes that preferences can be encoded by a single objective function. For example, in robotic assistive devices, technicians often attempt to maximize user comfort while simultaneously minimizing mechanical energy consumption for longer battery life. Similarly, in autonomous driving policy design, decision-makers wish to understand the trade-offs between multiple safety and performance attributes before committing to a policy. To address this gap, we propose the first framework for PBO with multiple objectives. Within this framework, we present \textit{dueling scalarized Thompson sampling (DSTS)}, a multi-objective generalization of the popular dueling Thompson algorithm, which may be of interest beyond the PBO setting. We evaluate DSTS across four synthetic test functions and two simulated exoskeleton personalization and driving policy design tasks, showing that it outperforms several benchmarks. Finally, we prove that DSTS is asymptotically consistent. As a direct consequence, this result provides, to our knowledge, the first convergence guarantee for dueling Thompson sampling in the PBO setting.
\end{abstract}

\section{Introduction}
\label{sec:intro}

Bayesian optimization (BO) is a framework for optimizing objective functions with expensive or time-consuming evaluations. It has been successful in real-world applications such as hyperparameter tuning of machine learning algorithms \citep{snoek2012practical}, e-commerce platform design \citep{letham2019bayesian}, and materials design \citep{zhang2020bayesian}. 
Preferential Bayesian optimization (PBO), a subframework within BO, focuses on settings where the objective function is \textit{latent}, i.e., where the objective values cannot be observed and instead, only ordinal preference feedback from a decision-maker (DM) is observed. 

While prior work in PBO has demonstrated success in various applications \citep{brochu2010bayesian,nielsen2015bivariateei,tucker2020preference}, existing methods operate under the assumption that preferences can be encoded by a single objective function. In practice, however, problems are often characterized by multiple conflicting objectives. This occurs, for instance, when multiple users with conflicting preferences collaborate in a joint design task, as illustrated in Figure \ref{fig:intro}, or when a user wishes to explore the trade-off between multiple conflicting attributes before committing to a design. 

To better motivate the need for multi-objective PBO, we delve into two applications. The first application involves an exoskeleton customization task that aims to enhance user comfort. In this situation, a user assisted by an exoskeleton experiences different gait designs and indicates the most comfortable option \citep{tucker2020human, tucker2020preference}. In this and other robotic assistive personalization applications, users and clinical technicians often collaborate on a design task to maximize user comfort (the user's objective) while optimizing energy consumption and other metrics related to the exoskeleton's long-term functionality (the technician's objective) \citep{kerdraon2021evaluation}.

The second application is autonomous driving policy design, where a user is presented with multiple simulations of an autonomous vehicle under different driving policies, and the user indicates the one with better safety and performance attributes \citep{biyik2019asking}. Here, policy-makers often wish to understand the trade-offs between multiple latent objectives, such as lane keeping and speed tracking, before committing to a specific policy \citep{bhatia2020preference}.

Motivated by the applications described above, we propose a framework for PBO with multiple objectives. Our contributions are as follows:
\begin{itemize}
    \item To the best of our knowledge, our work proposes the first framework for preferential Bayesian optimization with multiple objectives.

    \item We present \textit{dueling scalarized Thompson sampling (DSTS)}, the first extension of dueling Thompson sampling (DTS) algorithms \citep{sui2017multi, novoseller2020dueling, siivola2021preferential} to the multi-objective setting. 

    \item We prove that DSTS is asymptotically consistent. Furthermore, our result also provides the first convergence guarantee for DTS in single-objective PBO.

    \item We demonstrate our framework across six test problems, including simulated exoskeleton personalization and autonomous driving policy design tasks. Our results show that DSTS can efficiently explore the objectives' Pareto front using preference feedback.
\end{itemize}
\begin{figure}
    \centering
 \includegraphics[width=0.6\linewidth]{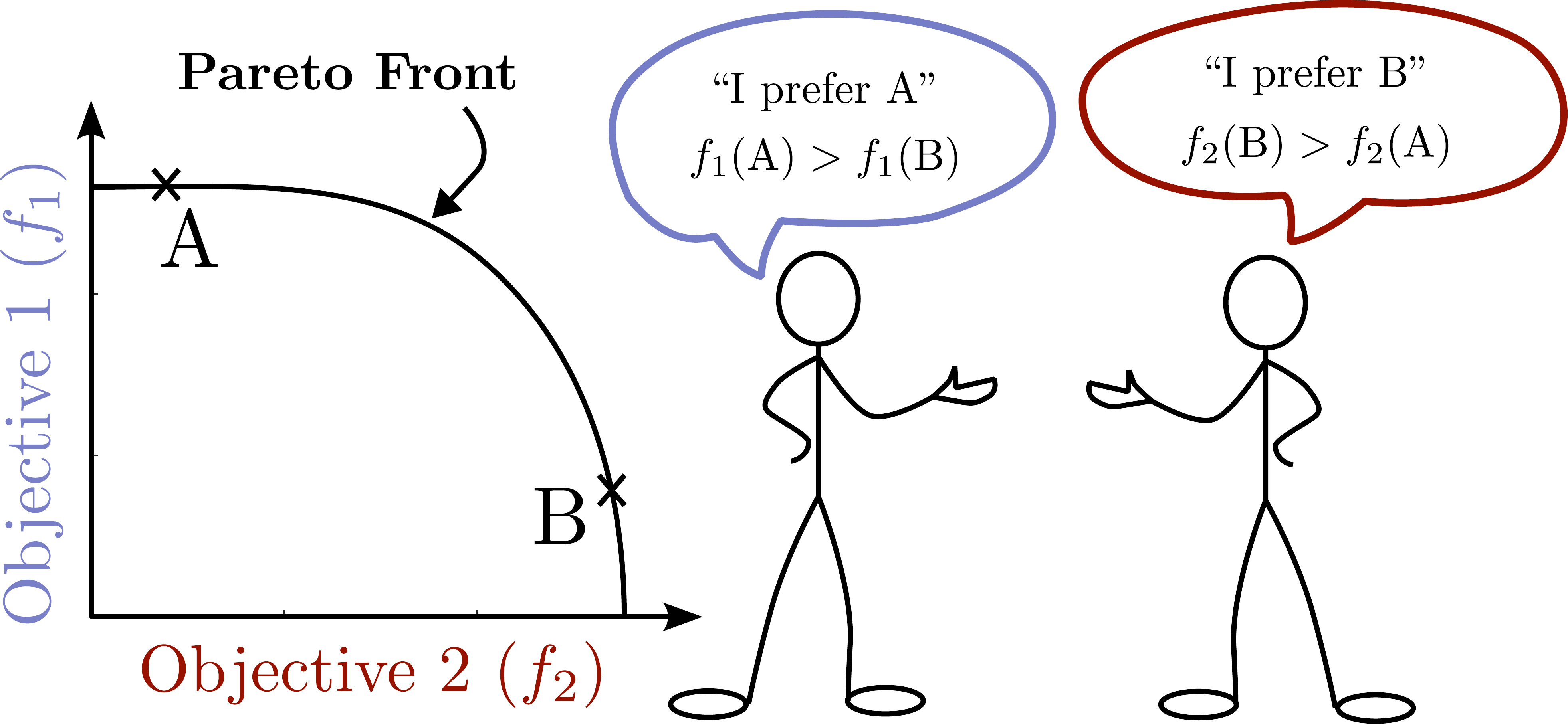}
    \caption{In this work, we extend preferential Bayesian optimization to the multi-objective setting through a novel self-sparring algorithm. Further, we apply this algorithm across four synthetic test problems and two practical applications (autonomous driving and exoskeleton locomotion). In contrast with existing approaches, our approach allows the decision-makers involved in the joint design task to efficiently explore the optimal trade-off between the conflicting objectives.}
    \label{fig:intro}
    \vspace{-0.1in}
\end{figure}

%
%
%
\vspace{-0.2cm}

\section{Related work}
\label{sec:related}
Below, we discuss the two main lines of research that intersect our work: preference-based and multi-objective optimization. A small portion of additional related work is discussed in Appendix~\ref{app:related}.

\subsection{Preference-based optimization}
Preference-based optimization has been actively studied across various frameworks, including multi-armed bandits \citep{yue2012k, bengs2021preference}, reinforcement learning \citep{wirth2017survey}, and BO \citep{brochu2010bayesian, gonzalez2017preferential}. It has been successful in a broad range of applications, such as personalized medicine \citep{nielsen2015bivariateei, sui2017multi, tucker2020human}, robot control \citep{biyik2019asking, tucker2021preference, maccarini2022preference,csomay2022learning} and, more recently, alignment of large language models \citep{rafailov2023direct}.

Most work in this area focuses on the single-objective setting. Two notable exceptions are the works of \citep{bhatia2020preference} and \citep{zhou2023beyond}.  \citet{bhatia2020preference} considers one-shot preference-based optimization across multiple criteria over a finite design space. This study adopts a game-theoretic viewpoint and introduces the concept of a Blackwell winner, which implicitly requires the user to specify an acceptable trade-off between criteria, in contrast with our work. \citep{zhou2023beyond} considers multi-objective preference alignment of large language models. Like our work, these two works are motivated by the idea that preference-based optimization across multiple objectives is crucial for capturing richer human feedback.

Our algorithm extends the self-sparring algorithm for multi-dueling bandits \citep{sui2017multi}, which has been adapted to preference-based reinforcement learning (termed dueling posterior sampling) \citep{novoseller2020dueling} and PBO (termed batch Thompson sampling) \citep{siivola2021preferential}. To our knowledge, our work is the first to generalize this algorithm to a multi-objective context. 

\vspace{-0.15cm}

\subsection{Multi-objective optimization} The field of multi-objective optimization has been extensively studied, encompassing both theoretical advancements and applications across various engineering problems \citep{miettinen1999nonlinear, marler2004survey, deb2013multi}. Literature within the BO framework is most closely related to our work \citep{khan2002multi, knowles2006parego, belakaria2019max, paria2020flexible, daulton2020differentiable}.

Our algorithm is inspired by ParEGO \citep{knowles2006parego}, a multi-objective BO algorithm that employs augmented Chebyshev scalarizations to convert a multi-objective optimization problem into multiple single-objective problems. Unlike \citet{knowles2006parego}, our objectives are not observable, preventing direct modeling of scalarized values. Instead, we model each objective separately and scalarize samples drawn from these models, similar to \citet{daulton2020differentiable}'s version of ParEGO.

Additionally, our work is related to research that incorporates user preferences in multi-objective optimization, a topic actively studied both within and beyond the BO framework \citep{branke2005integrating, wang2017mini, hakanen2017using, lin2022preference}. Unlike our work, these works assume that the objectives are observable and rely on a utility function to aggregate the objectives into a single real-valued measure of performance.

\vspace{-0.10cm}

\section{Problem setting}
\label{sec:problem}

\vspace{-0.15cm}

\paragraph{Preferences} Let $\X$ denote the space of designs. We assume there is a DM (which may represent one or multiple users collaborating on a design task) aiming to maximize their preferences over designs. We assume the DM's preferences can be encoded via $m$ objective functions $f_1,\ldots, f_m:\X\rightarrow\R$ so that, for any given pair of designs $x,x'\in\X$, the DM prefers $x$ over $x'$ with respect to objective $j$ if and only if $f_j(x) > f_j(x')$. For simplicity, we assume all $m$ objectives are latent, but our approach can be easily adapted to settings where some objectives are observable, as discussed in Section~\ref{sec:algo}. 

\paragraph{Goal} Let $f = [f_1,\ldots, f_m]: \X\rightarrow\R^m$ denote the concatenation of the $m$ objective functions. The DM seeks to find designs that maximize each objective. This concept is formalized through Pareto-dominance. For a pair of designs $x, x' \in \X$, $x$ Pareto-dominates $x'$, denoted by $x \succ_f x'$,  if $f_j(x) \geq f_j(x')$ for $j=1,\ldots, m$ with strict inequality for at least one index $j$. The DM seeks to find the  Pareto-optimal set of $f$, defined by  $\X_f^* := \{x : \nexists \; x' \text{ s.t. } x' \succ_f x\}$. The set $\Y^*_f := \{f(x) : x\in \X_f^*\}$ is termed the Pareto front of $f$. Figure~\ref{fig:pareto_front} depicts the Pareto front for one of our test problems; the light grey region is the set of feasible objective vectors, i.e., $\{f(x):x\in\X\}$ and the dark grey curve indicates the Pareto front of $f$.

\paragraph{Feedback} To assist the DM's goal, our algorithm collects preference feedback interactively (Algorithm~\ref{alg:main}). At each iteration, denoted by $n =1,\ldots, N$, the algorithm selects a \textit{query} constituted of $q$ designs $X_n = (x_{n,1}, \dots, x_{n,q}) \in \X^q$. The DM then indicates their most preferred design among these $q$ designs for each objective. Let $r_j(X_n) \in \{1, \ldots, q\}$ denote the DM's preferred design with respect to objective $j$. The collection of these responses is denoted by $r(X_n) = [r_1(X_n),\ldots, r_m(X_n)]$.

\begin{minipage}{0.59\textwidth}
    \begin{algorithm}[H]
        \caption{Dueling Scalarized Thompson Sampling}
        \label{alg:main}
        \begin{algorithmic}
                 \renewcommand{\algorithmicrequire}{ \textbf{Input}} 
                 \REQUIRE Initial dataset: $\mathcal{D}_0$, and prior on $f$: $p_0$.
                 \FOR{$n = 1, \ldots, N$}
                     \STATE Compute $p_n$, the posterior on $f$ given $\D_{n-1}$
                     \STATE Sample $\tilde{\theta}_{n}$ uniformly at random over $\Theta$ 
                     \STATE Draw samples $\tilde{f}_{n, 1}, \ldots \tilde{f}_{n, q} \overset{\mathrm{iid}}{\sim} p_n$
                     \STATE Find $x_{n,i} \in \argmax_{x \in \mathbb{X}}  s(\tilde{f}_{n, i}(x) ; \tilde{\theta}_n), \ i=1,\ldots,q$ 
                     \STATE Set $X_n = (x_{n, 1}, \ldots, x_{n, q}) $, observe $r(X_n)$ 
                     \STATE Update data set $\D_n = \D_{n-1} \cup \{(X_n, r(X_n)) \}$ 
                 \ENDFOR
             \end{algorithmic}
         \end{algorithm}
\end{minipage}%
\hspace{0.04\textwidth}
\begin{minipage}{0.37\textwidth}
\begin{figure}[H]
\centering
  \includegraphics[width=0.76\linewidth]{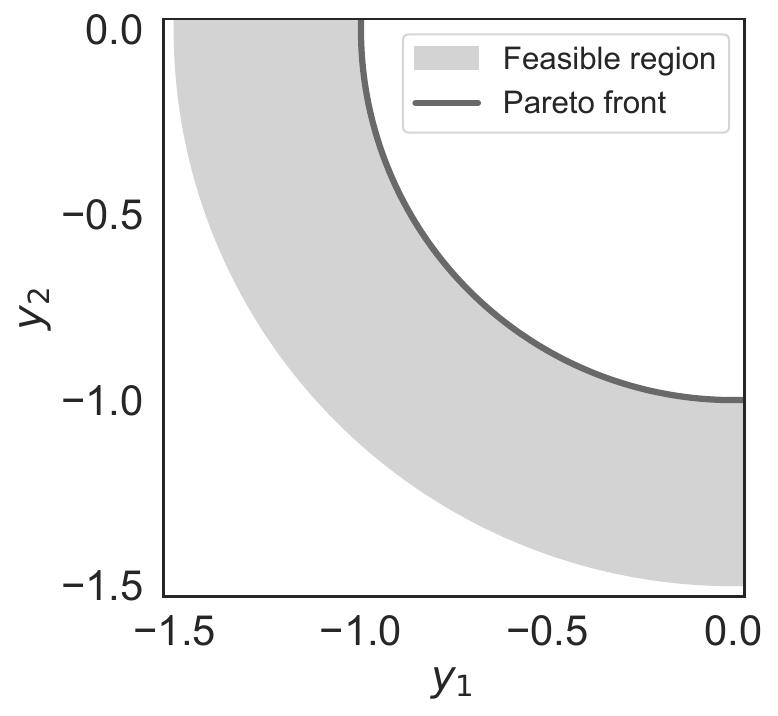}
    \caption{Feasible region and Pareto front of the DTLZ2 test function. \label{fig:pareto_front}}
\end{figure}
\end{minipage}

\section{Dueling scalarized Thompson sampling}
\label{sec:algo}
We introduce a novel algorithm termed \textit{dueling scalarized Thompson sampling (DSTS)}, summarized in Algorithm~\ref{alg:main}. DSTS is obtained by adeptly combining ideas from preference-based and multi-objective optimization to derive a sound algorithm with strong performance and convergence guarantees. As is common in BO, our algorithm includes a probabilistic model of the objective functions for predictions and uncertainty reasoning, along with a sampling policy that, informed by the probabilistic model, iteratively selects new queries, balancing exploration and exploitation. 


\subsection{Probabilistic model}
The probabilistic model is encoded by a prior distribution over $f$, denoted by $p_0$. We assume $p_0$ consists of a set of independent Gaussian processes, each corresponding to an objective. However, our framework does not rely on this choice and can easily accommodate other priors as long as samples from the posterior distribution can be drawn.

As is standard in the PBO literature \citep{gonzalez2017preferential,nguyen2021top, astudillo2023qeubo}, we account for noise in the DM's responses by using a Logistic likelihood for each objective $j=1,\ldots, m$ of the following form:
\begin{equation}
\label{eq:likelihood}
    \prob\left(r_j(X_n)=i\mid f_j(X_n)\right) = \frac{\exp(f_j(x_{n,i})/\lambda_j)}{\sum_{i'=1}^q \exp(f_j(x_{n,i'})/\lambda_j)}, \ i=1,\ldots, q, 
\end{equation}
where $\lambda_j > 0$ is the noise-level parameter. We estimate $\lambda_j$ along with the other hyperparameters via maximum likelihood. We assume noise is independent across objectives and interactions. 

Let $\D_0$ denote the initial data set and $\D_{n-1} = \D_0\cup\{(X_k, r(X_k))\}_{k=1}^{n-1}$ denote the data collected before the $n$-th interaction with the DM. Let $p_n$ denote he posterior  over $f$ given $\D_{n-1}$. The posterior cannot be computed in closed form but can be approximated using, e.g., a variational inducing point approach \citep{nguyen2021top}. For observable objectives, the above model can be replaced by a standard Gaussian process model with a Gaussian likelihood (see Appendix~\ref{app:model}). 

\subsection{Sampling policy}
Our primary algorithmic contribution is our sampling policy, which extends the dueling Thompson sampling (DTS) algorithmic family to the multi-objective setting. This is achieved by leveraging Chebyshev scalarizations, a technique from multi-objective optimization used to decompose a multi-objective optimization problem into multiple single-objective problems. Below, we explain Chebyshev scalarizations and detail our approach to integrating them with DTS.

\paragraph{Augmented Chebyshev scalarizations} Augmented Chebyshev scalarizations are widely used for multi-objective optimization \citep{miettinen1999nonlinear}. In BO, in particular, they were employed by \citet{knowles2006parego} and \citet{paria2020flexible}. We also leverage them to derive a sound sampling policy in our setting.

For a given vector of scalarization parameters, $\theta  \in \Theta := \{\theta\in \R^m : \sum_{j=1}^m\theta_j = 1 \textnormal{ and } \theta_j \geq 0, \ j=1,\ldots, m\}$, the Chebyshev scalarization function is defined by 
\begin{equation}
\label{eq:scalarized}
    s(y ;\theta) = \min_{j=1,\ldots, m}\{\theta_j y_j\} + \rho \sum_{j=1}^m \theta_j y_j,
\end{equation}
where $\rho$ is a small positive constant. It can be shown that any solution of $\max_{x\in\X}s\left(f(x);\theta \right)$ lies in the Pareto-optimal set of $f$. Conversely, if $\rho$ is small enough, every point in the Pareto-optimal set of $f$ is a solution of $\max_{x\in\X}s\left(f(x);\theta \right)$ for some $\theta\in\Theta$ (Theorem 3.4.6,  \citealp{miettinen1999nonlinear}). 

\paragraph{Dueling scalarized Thompson sampling}
At each iteration, $n$, we draw a sample from the scalarization parameters uniformly at random over $\Theta$, denoted by $\tilde{\theta}_n$. We also draw $q$ independent samples, denoted by $\tilde{f}_{n,1}, \ldots, \tilde{f}_{n,q}$, from the posterior distribution on $f$ given $\D_n$. The next query is then given by $X_n = (x_{n,1}, \ldots, x_{n,q})$, where
\begin{equation}
\label{eq:dsts}
x_{n,i} \in \argmax_{x\in\X}s\left(\tilde{f}_{n,i}(x); \tilde{\theta}_n\right), \ i=1,\ldots, q. 
\end{equation}

Intuitively, our sampling policy operates by first determining a subset of the Pareto-optimal set of $f$ using $\tilde{\theta}_n$, denoted as $\X^*_{f;\tilde{\theta}_n} = \argmax_{x\in\X}s(f(x); \tilde{\theta}_n)$. Then, each $x_{n,i}$ is sampled according to the probability (induced by the posterior on $f$) that $x_{n,i}\in \X^*_{f;\tilde{\theta}_n}$, analogous to single-objective dueling posterior sampling \citep{sui2017multi}. The DM's responses provide information of the highest value point among $x_{n,1},\ldots, x_{n,q}$ for each objective, which in turn allows us to learn about $\X^*_{f;\tilde{\theta}_n}$. Since $\tilde{\theta}_n$ is drawn independently at each iteration, we explore a diverse collection of subsets $\X^*_{f;\tilde{\theta}_n}$ within $\X^*_f$.

We note that our sampling policy is agnostic to the choice of the probabilistic model, provided that samples from the posterior can be drawn. In addition, our sampling policy is suitable for problems with mixed latent and observable objectives thanks to its dual dueling and batch interpretation. Specifically, when all objectives are observable, our sampling policy can be interpreted as a batch version of the scalarized Thompson sampling algorithm proposed by \citet{paria2020flexible}.

\begin{figure*}[tb]
  \begin{subfigure}{0.335\textwidth}
  \centering
    \includegraphics[width=\linewidth]{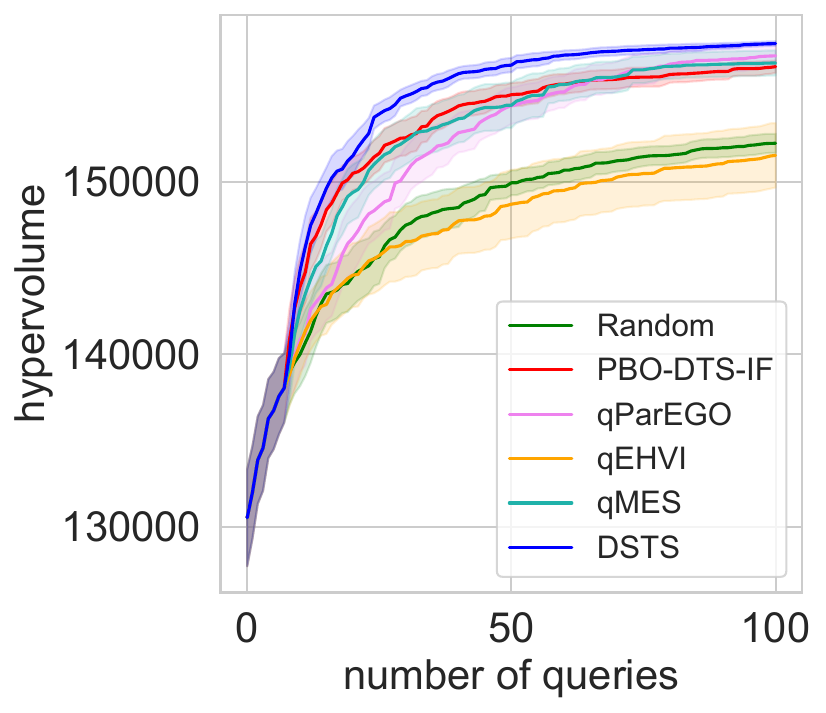}
    \caption{DTLZ1}
    \label{fig:resulta}
  \end{subfigure}%
  \hspace*{\fill}   
  \begin{subfigure}{0.315\textwidth}
    \centering
    \includegraphics[width=\linewidth]{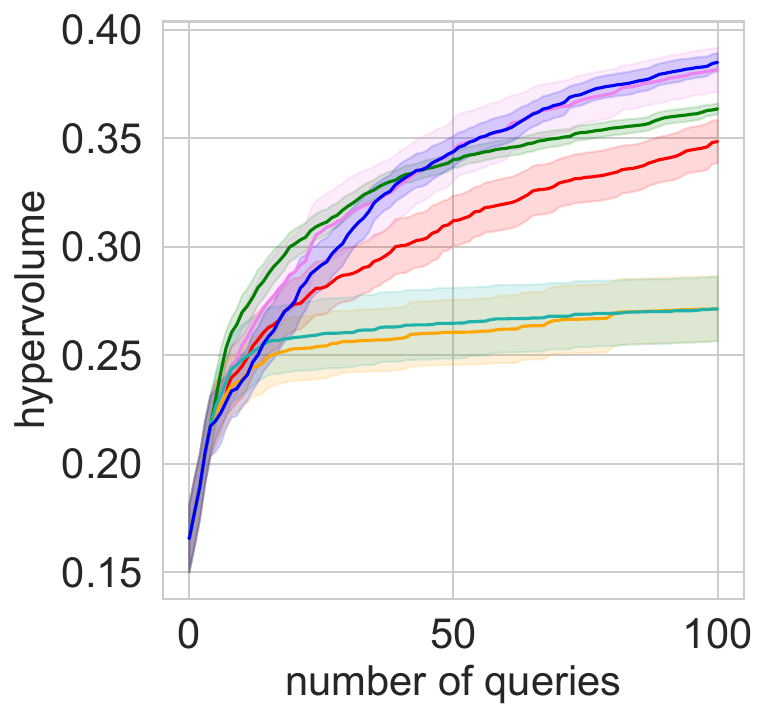}\caption{DTLZ2}
    \label{fig:resultb}
  \end{subfigure}%
  \hspace*{\fill}   
  \begin{subfigure}{0.312\textwidth}
    \centering
    \includegraphics[width=\linewidth]{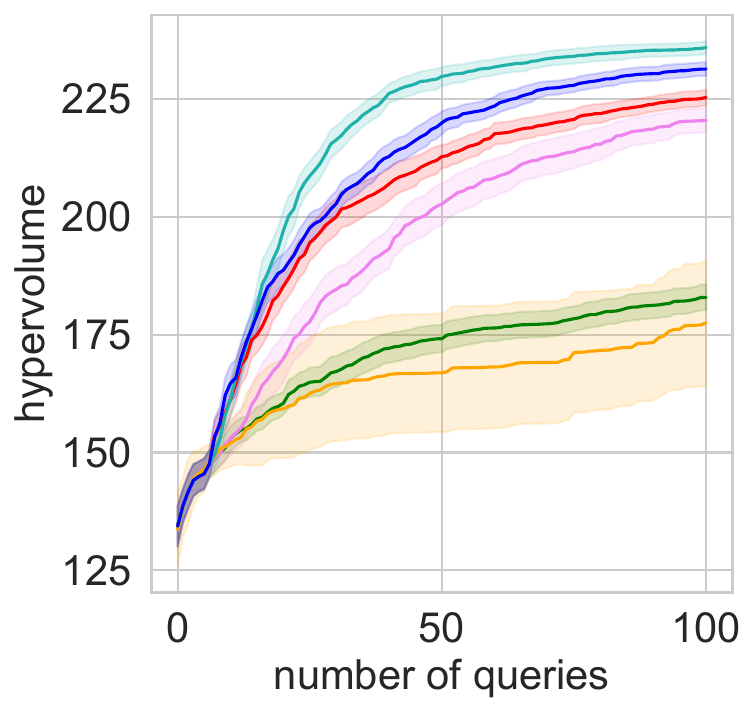}
    \caption{Vehicle Safety}
    \label{fig:resultc}
  \end{subfigure}

   \begin{subfigure}{0.32\textwidth}
  \centering
    \includegraphics[width=\linewidth]{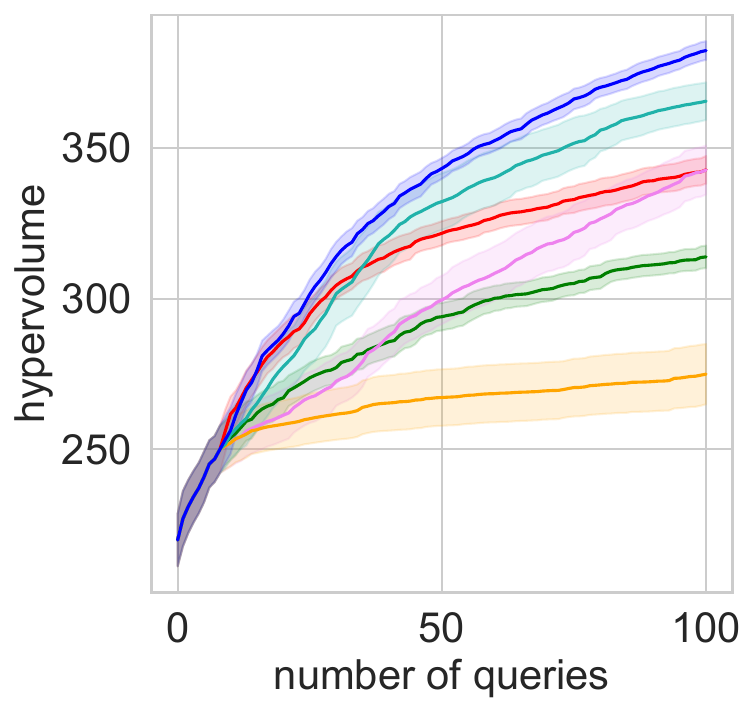}
    \caption{Car Side Impact}
    \label{fig:resultd}
  \end{subfigure}%
  \hspace*{\fill}   
  \begin{subfigure}{0.3\textwidth}
    \centering
    \includegraphics[width=\linewidth]{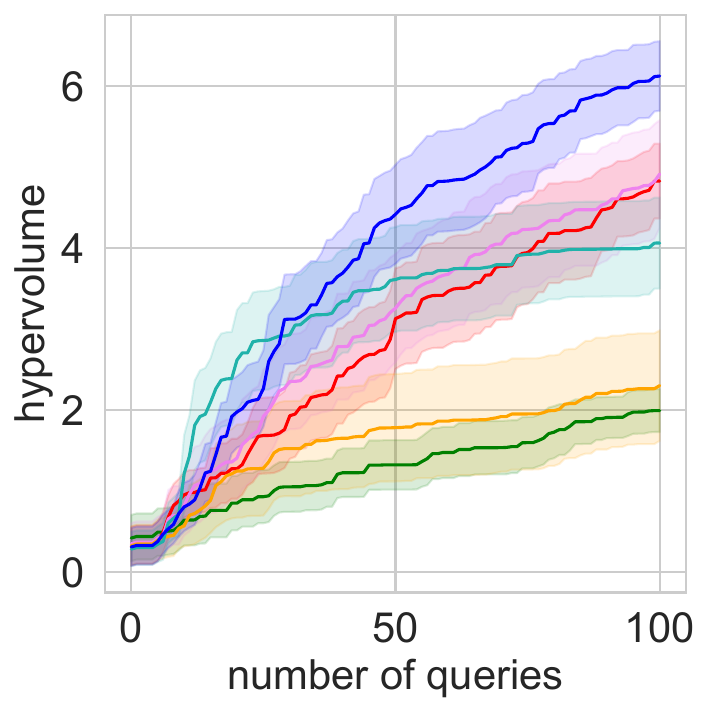}
    \caption{Autonomous Driving}
    \label{fig:resulte}
  \end{subfigure}
  \hspace*{\fill}   
  \begin{subfigure}{0.325\textwidth}
    \centering
    \includegraphics[width=\linewidth]{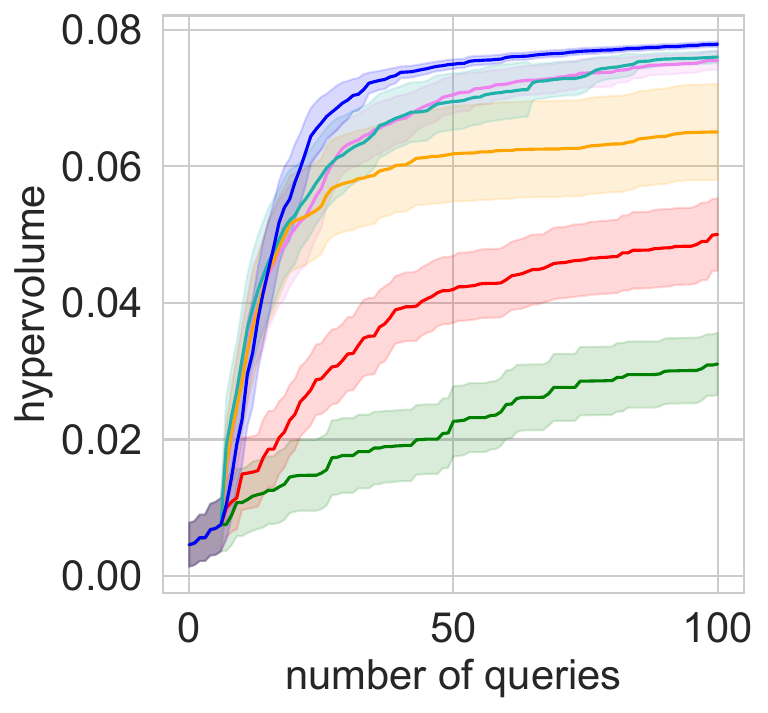}\caption{Exoskeleton}
    \label{fig:resultf}
  \end{subfigure}%

  \caption{Our framework was demonstrated on six test problems: DTLZ1 (a), DTLZ2 (b), Vehicle Safety (c), Car Side Impact (d), Autonomous Driving (e), and  Exoskeleton (f). Overall, our proposed method (DSTS) delivers the best performance. qMES and qParEGO exhibit a mixed performance, achieving good results in some test problems and poor results in others. The remaining methods, Random, PBO-DTS-IF, and qEHVI, consistently underperform DSTS. 
 \label{fig:results}}
 \vspace{-0.23in}
\end{figure*}

\subsection{Theoretical analysis}
We now discuss the convergence properties of DSTS. To our knowledge, the convergence of DTS in single-objective PBO has not been established. Therefore, our result also provides the first convergence guarantee for DTS in this setting. Notably, proving convergence for the multi-objective case requires a slight modification to DSTS. This modification involves maintaining a small probability of comparing against a fixed design in each iteration. Our proof requires this assumption due to the non-linear nature of Chebyshev scalarizations. This assumption is not required for the single-objective case, and so we provide a simpler statement and proof for the single-objective setting in Appendix~\ref{sec:proofs}. Our result for the multi-objective setting is stated in Theorem~\ref{thm:ac}, and its proof, which relies on a Martingale argument and the ability of Chebyshev scalarizations to recover all Pareto optimal points as $\theta$ varies, can be found in Appendix~\ref{sec:proofs}.

Before stating our result, we describe the modified version of DSTS for which we prove asymptotic consistency. Assume $q=2$ and fix any $x_\mathrm{ref} \in \X$ and $\delta \in (0,1)$. We consider a variant of DSTS for which, at each iteration, one design, $x_{n,1}$, is chosen as per Equation~\ref{eq:dsts}, while $x_{n,2}$ defaults to $x_\mathrm{ref}$ with probability $\delta$, or is selected via Equation~\ref{eq:dsts} otherwise.

\begin{theorem}
\label{thm:ac}
Suppose that $\X$ is finite, $q=2$, and the sequence of queries $\{X_n\}_{n=1}^\infty$ is chosen according to the modified DSTS policy described above. Then, $\lim_{n\rightarrow\infty}\prob_n(x\in \X_f^*) = \mathbf{1}\{x\in \X_f^*\}$ almost surely for each $x\in\X$.
\end{theorem}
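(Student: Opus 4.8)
The plan is to argue inside the Bayesian model itself, treating the true objectives $f$ as a draw from the prior $p_0$ and the responses as generated by the likelihood~\eqref{eq:likelihood}, and to establish the almost-sure statement under this joint law. Fix $x\in\X$ and set $A=\{x\in\X_f^*\}$, an event determined by $f$. Let $\mathcal{F}_n$ be the $\sigma$-algebra generated by the history $\D_{n-1}$ together with the queries and internal randomization used to produce it, so that $\prob_n(x\in\X_f^*)=\prob(A\mid\mathcal{F}_n)$; additional independent sampling randomness does not affect the posterior of $f$, so this identification is unambiguous. Being a Doob martingale, this sequence converges almost surely, by L\'evy's upward theorem, to $\prob(A\mid\mathcal{F}_\infty)$ with $\mathcal{F}_\infty=\sigma\!\left(\bigcup_n\mathcal{F}_n\right)$. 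It therefore suffices to prove that $A$ is $\mathcal{F}_\infty$-measurable, since L\'evy's zero--one law then gives $\prob(A\mid\mathcal{F}_\infty)=\mathbf{1}\{x\in\X_f^*\}$; equivalently, the information accumulated in the limit must determine whether $x$ is Pareto-optimal.

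The second step isolates why a fixed reference design is useful. A continuous Gaussian process prior assigns probability zero to ties $f_j(a)=f_j(b)$, so almost surely $\X_f^*$ is a deterministic function of the signs $\mathrm{sign}(f_j(a)-f_j(b))$ over all pairs $a,b\in\X$ and objectives $j$. I claim each such sign is recoverable from infinitely many comparisons against the common anchor $x_\mathrm{ref}$. Conditioned on $f$, the responses on objective $j$ at the iterations that query $(a,x_\mathrm{ref})$ are independent with success probability $\exp(f_j(a)/\lambda_j)/(\exp(f_j(a)/\lambda_j)+\exp(f_j(x_\mathrm{ref})/\lambda_j))$; by the strong law of large numbers their empirical frequency converges almost surely to this value, and inverting the logistic link recovers the scaled difference $(f_j(a)-f_j(x_\mathrm{ref}))/\lambda_j$ as an $\mathcal{F}_\infty$-measurable quantity. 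Subtracting the analogous quantity for $b$ yields $(f_j(a)-f_j(b))/\lambda_j$, whose sign equals $\mathrm{sign}(f_j(a)-f_j(b))$ because $\lambda_j>0$. Thus, \emph{crucially}, comparing designs to a single anchor infinitely often makes the entire dominance structure among them $\mathcal{F}_\infty$-measurable by subtraction; this is the role of $x_\mathrm{ref}$, and it has no single-objective counterpart, where the Pareto set collapses to the argmax and is already determined ordinally by comparisons among candidate optima.

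The final and most delicate step shows that, almost surely, every design whose Pareto status remains in doubt is in fact compared to $x_\mathrm{ref}$ infinitely often. I would argue by contradiction: if $\prob_n(x\in\X_f^*)$ fails to converge to an indicator on an event of positive probability, then by the martingale limit there is $\varepsilon>0$ with $\prob_n(x\in\X_f^*)\in(\varepsilon,1-\varepsilon)$ infinitely often on that event. By the scalarization completeness result (Theorem 3.4.6, \citealp{miettinen1999nonlinear}), $\{x\in\X_f^*\}$ coincides with the event that $x$ maximizes $s(f(\,\cdot\,);\theta)$ for some $\theta\in\Theta$, so this persistent uncertainty should force a probability bounded below that the Thompson-sampled maximizer in~\eqref{eq:dsts} equals $x$; the full support of the Gaussian process posterior and the uniform draw of $\tilde{\theta}_n$ make this probability positive. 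Then $x$ is selected as $x_{n,1}$ with non-vanishing probability infinitely often and, being paired with $x_\mathrm{ref}$ with the fixed probability $\delta$, is compared to the anchor infinitely often; the conditional (L\'evy) Borel--Cantelli lemma converts these non-vanishing one-step probabilities into infinitely many actual comparisons almost surely. By the previous step this fixes $x$'s values relative to $x_\mathrm{ref}$, and together with the comparisons for its potential dominators it determines whether $x\in\X_f^*$, contradicting the assumed uncertainty.

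The principal obstacle, and the reason the $x_\mathrm{ref}$ modification is introduced, lies in the middle of this last step: because the $\min$ term in~\eqref{eq:scalarized} makes the scalarization nonlinear, the set of weights $\theta$ for which a given Pareto-optimal design maximizes $s(f(\,\cdot\,);\theta)$ can be small, so posterior uncertainty about Pareto-optimality does not automatically yield a uniform lower bound on the probability that the design is the sampled scalarized maximizer. Securing a non-negligible sampling frequency for every design the posterior still deems plausibly optimal, so that it is anchored against $x_\mathrm{ref}$ infinitely often, is where the fixed probability $\delta$ and the augmentation constant $\rho$ in~\eqref{eq:scalarized} do the real work, and where I would concentrate the technical effort; in the single-objective case the scalarization is the identity, this difficulty disappears, and the anchor is unnecessary.
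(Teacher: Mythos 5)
Your proposal follows essentially the same architecture as the paper's proof: L\'evy/martingale convergence of $\prob_n(x\in\X_f^*)$ to $\prob_\infty(x\in\X_f^*)$, repeated comparisons against the anchor $x_\mathrm{ref}$ combined with the strong law of large numbers and inversion of the logistic link to make the differences $f_j(\cdot)-f_j(x_\mathrm{ref})$ (hence pairwise differences) $\mathcal{F}_\infty$-measurable, and a contradiction argument powered by the Chebyshev completeness theorem (Theorem 3.4.6 of Miettinen). However, there is a genuine gap exactly at the step you yourself flag and then defer: you assert that persistent posterior uncertainty about $x$'s Pareto status ``should force a probability bounded below'' that $x$ is the Thompson-sampled maximizer in Equation~\ref{eq:dsts}, appealing to ``full support of the Gaussian process posterior'' and the uniform draw of $\tilde{\theta}_n$. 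This is never proved, and mere positivity at each $n$ is not enough: for the conditional Borel--Cantelli step to produce infinitely many comparisons with $x_\mathrm{ref}$, you need the one-step conditional probabilities to be bounded away from zero along the iterations (or at least to have divergent sum). The paper closes this hole with its Lemma~\ref{lemma1}: when $x\in\X_f^*$, there is a weight $\theta$ at which $x$ is the \emph{unique} maximizer of $s(f(\cdot);\theta)$; by continuity in $\theta$ and finiteness of $\X$, $x$ remains the unique maximizer on a whole neighborhood $\Psi\subset\Theta$ of positive measure. The martingale argument is then applied not to the Pareto event itself but to the event $E=\{x=\argmax_{x'\in\X}s(f(x');\psi)\ \forall\,\psi\in\Psi\}$, which has $\prob_\infty(E)>0$ whenever $\prob_\infty(x\in\X_f^*)>0$; convergence of $\prob_n(E)$ to this positive limit yields an eventual lower bound $\epsilon_1$, which together with $\prob_n(\tilde{\theta}_n\in\Psi)\geq\epsilon_2$ and the anchor probability $\delta$ gives $\prob_n(X_n=(x,x_\mathrm{ref}))\geq\epsilon_1\epsilon_2\delta$ for all large $n$. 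It is the uniqueness of the maximizer on a positive-measure set of weights, not full support or the constant $\rho$ per se, that supplies the uniform bound you are missing.

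There is a second, smaller gap in your closing sentence: ``together with the comparisons for its potential dominators it determines whether $x\in\X_f^*$'' presupposes that every relevant potential dominator of $x$ is also anchored against $x_\mathrm{ref}$ infinitely often. But Thompson sampling only repeatedly selects designs that are plausibly scalarized maximizers; a design $x'$ that plausibly dominates $x$ need not itself be plausibly Pareto-optimal, so nothing in your argument forces it to be compared to the anchor. The paper handles this by choosing the witness carefully: since $\X$ is finite, on the event that some design dominates $x$, some \emph{Pareto-optimal} design dominates $x$ (transitivity of dominance plus existence of maximal elements), so one can fix $x'$ with both $\prob_\infty(x'\succ_f x)>0$ and $\prob_\infty(x'\in\X_f^*)>0$. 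The latter condition lets Lemma~\ref{lemma2} be applied to $x'$ as well, making $f_j(x')-f_j(x)$ $\mathcal{F}_\infty$-measurable, whence $\prob_\infty(x'\succ_f x)\in\{0,1\}$ and the contradiction follows. Without this selection step your contradiction does not close, so both the lower-bound lemma and the dominator-selection argument would need to be supplied to turn your outline into a complete proof.
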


We now comment on existing theoretical results related to DTS. \cite{sui2017multi} showed that DTS achieves asymptotic consistency and has sublinear regret in a dueling bandits setting, assuming independent pairs of arms. This result does not apply to our setting, where arms are correlated through the objective functions. Notably, the analysis of \cite{sui2017multi} relies on showing that all arms are chosen infinitely often, which may not be true in our context.
Similarly, \cite{novoseller2020dueling} showed analogous convergence results in a reinforcement learning setting under a Bayesian linear reward model. In contrast, our result holds for non-linear objectives. Moreover, the result of \cite{novoseller2020dueling}  also relies on showing that each arm is selected infinitely often. Finally,  note that the results of \cite{sui2017multi} and \cite{novoseller2020dueling} are only applicable in the single-objective setting; as discussed above, the multi-objective setting presents additional challenges.

Unlike \cite{sui2017multi} and \cite{novoseller2020dueling}, we do not provide regret bounds for DSTS. Notably, regret bounds for DTS remain unknown even for single-objective PBO. While we see this as a valuable research direction, such analysis is beyond the scope of our work which primarily aims to introduce multi-objective PBO. Finally, it is important to recognize that the asymptotic consistency of data-driven algorithms like DSTS cannot be taken for granted. For instance, \cite{astudillo2023qeubo} showed that the adaptation of qEI proposed by \cite{siivola2021preferential} is not asymptotically consistent and can perform poorly in single-objective PBO, despite being one of the most widely used algorithms.  We show empirically and theoretically that qEHVI \citep{daulton2020differentiable}, a multi-objective extension of qEI, also exhibits this issue in our setting (see Section \ref{sec:exp} and Appendix~\ref{sec:proofs}).

\section{Numerical experiments}
\label{sec:exp}
We evaluate our algorithm across six test problems and compare it with five other sampling policies. All algorithms are implemented using BoTorch \citep{balandat2020botorch}. Details on our performance metric, the benchmark sampling policies, and the test problems are provided below.
%

\subsection{Performance metric}
We quantify performance using the hypervolume indicator, which has been shown to result in good coverage of Pareto fronts when maximized \citep{zitzler2003performance}. Let $\widehat{\Y}^* = \{y_\ell\}_{\ell=1}^L$ be a finite approximation of the Pareto front of $f$. Its hypervolume is given by $\operatorname{HV}(\widehat{\Y}^*, r)=\mu\left(\bigcup_{\ell=1}^{L}\left[r, y_\ell\right]\right)$, where $r \in \mathbb{R}^m$ is a reference vector, $\mu$ denotes the Lebesgue measure over $\mathbb{R}^m$, and $\left[r, y_\ell\right]$ denotes the hyper-rectangle bounded by the vertices $r$ and $y_\ell$. We report performance by setting $\widehat{\Y}^*$ equal to the set of Pareto optimal points accros designs shown to the DM.
 
\subsection{Benchmarks}
We compare our algorithm (DSTS) against uniform random sampling (Random), three adapted algorithms from standard multi-objective BO (qParEgo, qEHVI, qMES), and a standard PBO algorithm with inconsistent overall preference feedback (PBO-DTS-IF). Our experiments in this section use the regular version of DSTS. In Appendix~\ref{app:dstsm} we show that the modified version of DSTS used in Theorem~\ref{thm:ac} achieves virtually the same performance for small values of $\delta$.  All algorithms use the same priors, and the resulting posteriors are approximated via the variational inducing point approach proposed by \citet{nguyen2021top}. Approximate samples from the posterior distribution used by DSTS and PBO-DTS-IF are obtained via 1000 random Fourier features \citep{rahimi2007random}.

\label{ref:benchmarks}
\paragraph{Adapted standard multi-objective BO methods} A common approach in the PBO literature is to use a \textit{batch} acquisition function designed for parallel BO with observable objectives, ignoring the fact that preference feedback is observed rather than objective values \citep{siivola2021preferential, takeno23towards}. Despite lacking the principled interpretations they enjoy in their original setting, they often deliver strong empirical performance. Following this principle, we adopt three batch acquisition functions from standard multi-objective BO as benchmarks: qParEGO \citep{knowles2006parego,daulton2020differentiable}, qEHVI \citep{daulton2020differentiable}, and qMES \citep{belakaria2019max}. Since these algorithms were not originally designed for latent objectives, they require minor adaptations that we describe in Appendix~\ref{app:benchmakrs}. These algorithms use the same probabilistic model as DSTS. Thus, any difference in performance is solely due to the use of different sampling policies.

\paragraph{Single-objective PBO with inconsistent aggregated preference feedback} Single-objective 
PBO methods are often applied to problems characterized by multiple conflicting objectives. In such cases, DMs are expected to aggregate their preferences across objectives, which can be challenging for DMs and often results in inconsistent feedback. For example, in the context of exoskeleton personalization, this would require forcing the exoskeleton user and clinical technician to reach a unified response at every iteration, which can be challenging if the user's objective is to maximize comfort while the technician's objective is to ensure the exoskeleton's long-term energy efficiency. To understand the effect of using this approach, we include a standard single-objective PBO approach using inconsistent feedback. Additional details on this benchmark are provided in Appendix~\ref{app:benchmakrs}.

\begin{figure}
    \centering
    \begin{subfigure}{0.33\linewidth}
  \centering
    \includegraphics[width=\linewidth]{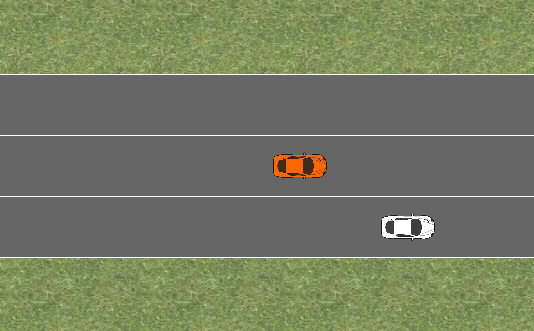}
    \caption{Autonomous Driving}
    \label{fig:car_sim}\end{subfigure} %
    \begin{subfigure}{0.65\linewidth}
  \centering
    \includegraphics[width=\linewidth]{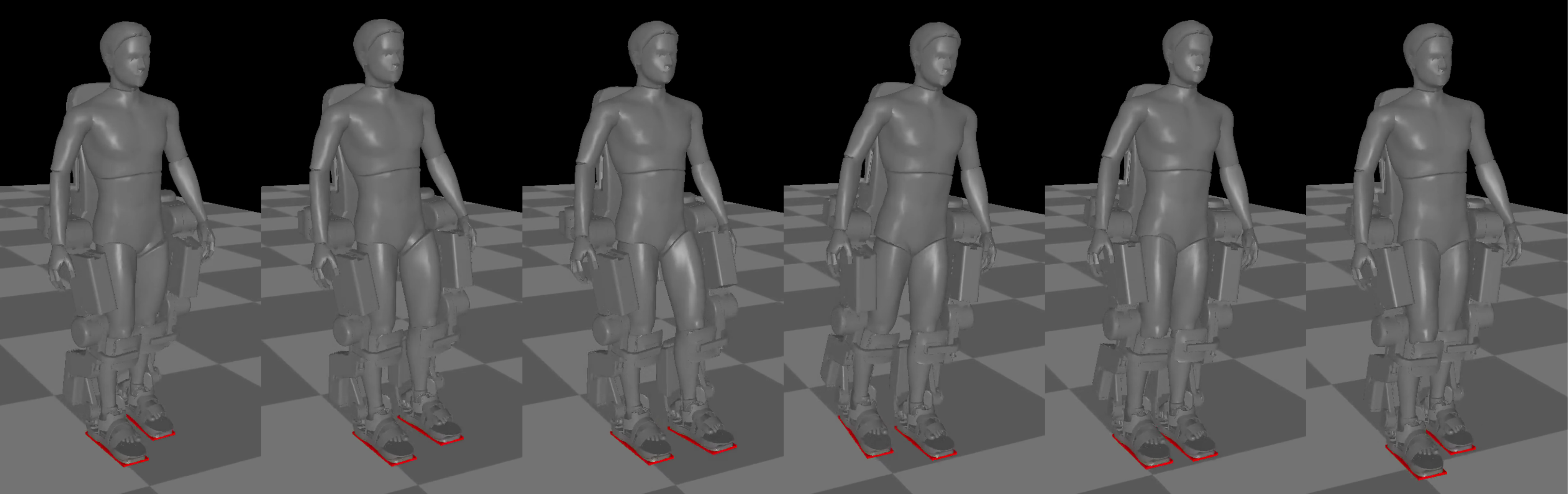}
    \caption{Exoskeleton Walking}
    \label{fig:exo_sim}\end{subfigure}
    \caption{Simulation environments used in our test problems.}
    \label{fig:environments}
    \vspace{-0.5cm}
\end{figure}

\begin{figure*}[ht]
  \begin{subfigure}{0.29\textwidth}
  \centering
    \includegraphics[width=\linewidth]{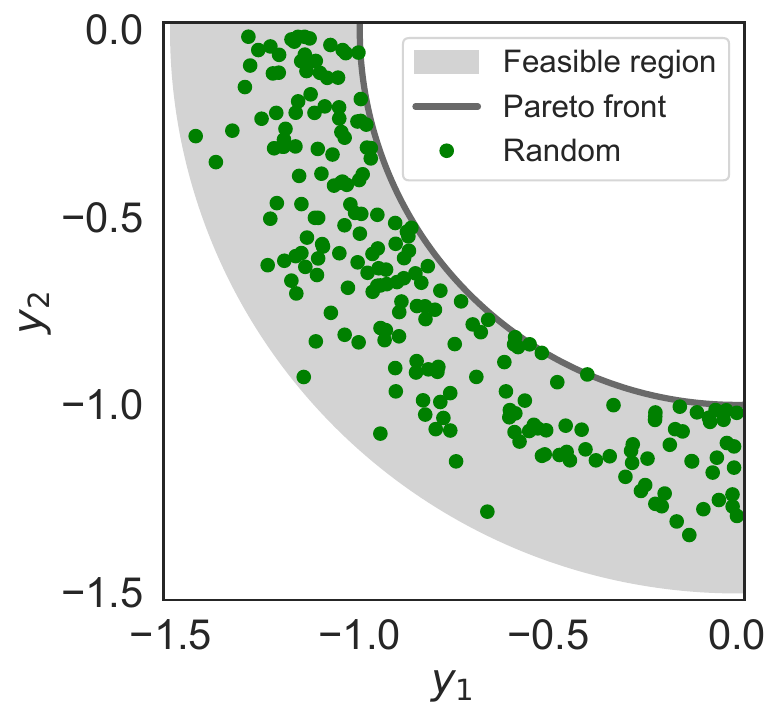}
    \caption{Random}
    \label{fig:algoa}
  \end{subfigure}%
  \hspace*{\fill}   
  \begin{subfigure}{0.29\textwidth}
    \centering
    \includegraphics[width=\linewidth]{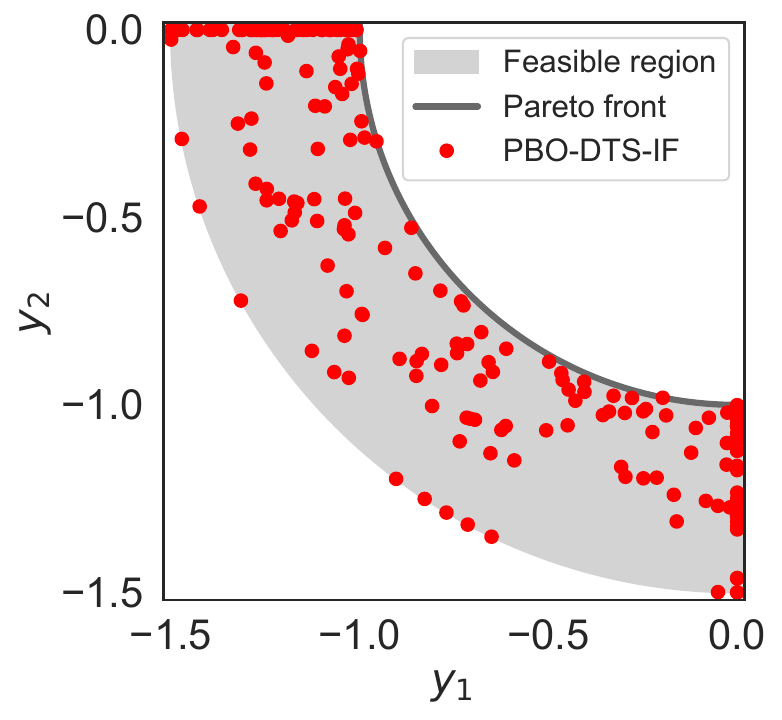}\caption{PBO-DTS-IF}
    \label{fig:algob}
  \end{subfigure}%
  \hspace*{\fill}   
  \begin{subfigure}{0.29\textwidth}
    \centering
    \includegraphics[width=\linewidth]{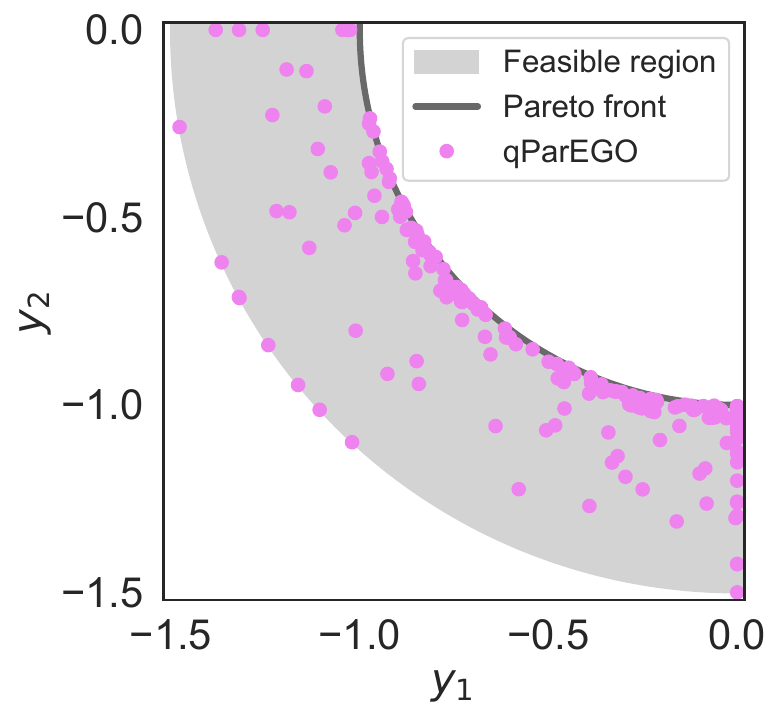}
    \caption{qParEGO}
    \label{fig:algoc}
  \end{subfigure}
  \begin{subfigure}{0.29\textwidth}
  \centering
    \includegraphics[width=\linewidth]{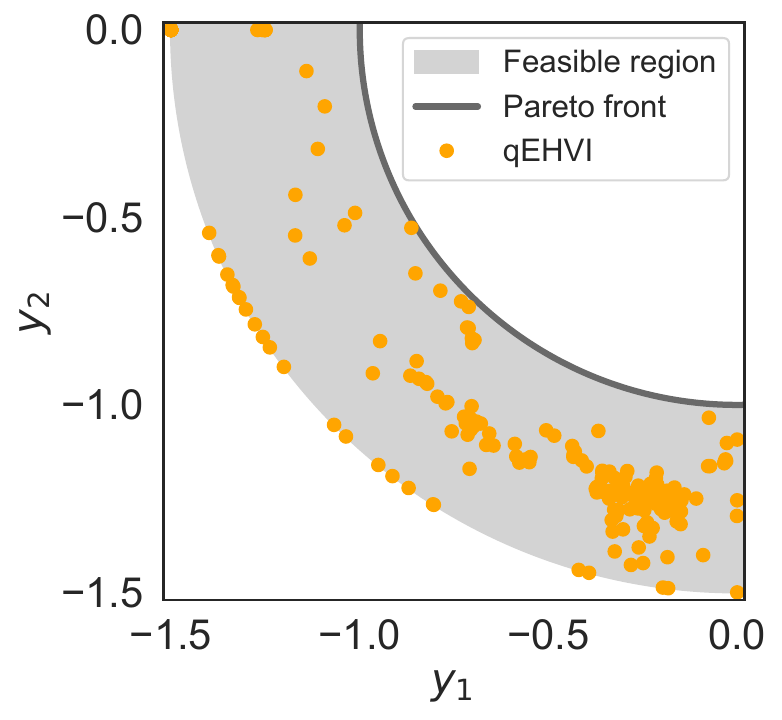}
    \caption{qEHVI}
    \label{fig:algod}
  \end{subfigure}%
  \hspace*{\fill}   
  \begin{subfigure}{0.29\textwidth}
    \centering
    \includegraphics[width=\linewidth]{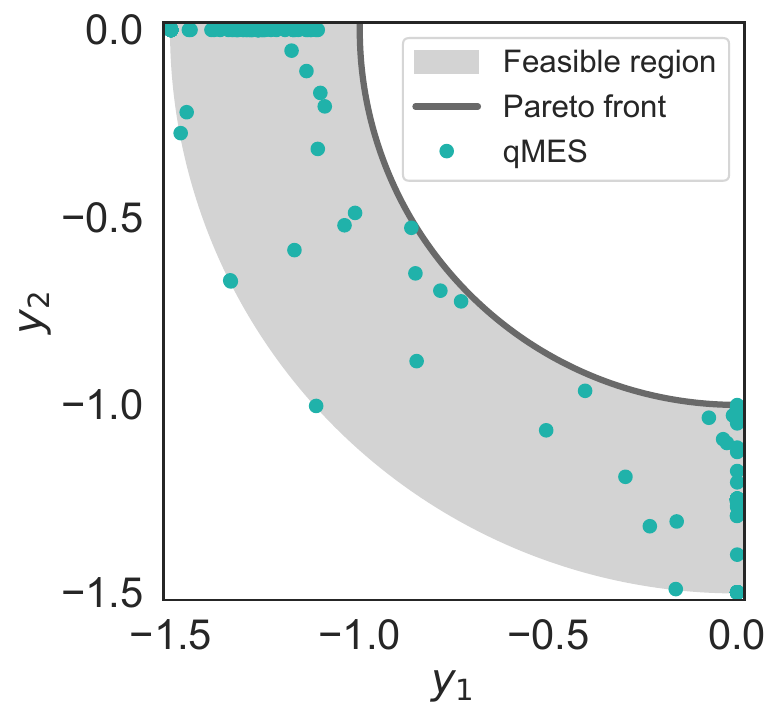}\caption{qMES}
    \label{fig:algoe}
  \end{subfigure}%
  \hspace*{\fill}   
  \begin{subfigure}{0.29\textwidth}
    \centering
    \includegraphics[width=\linewidth]{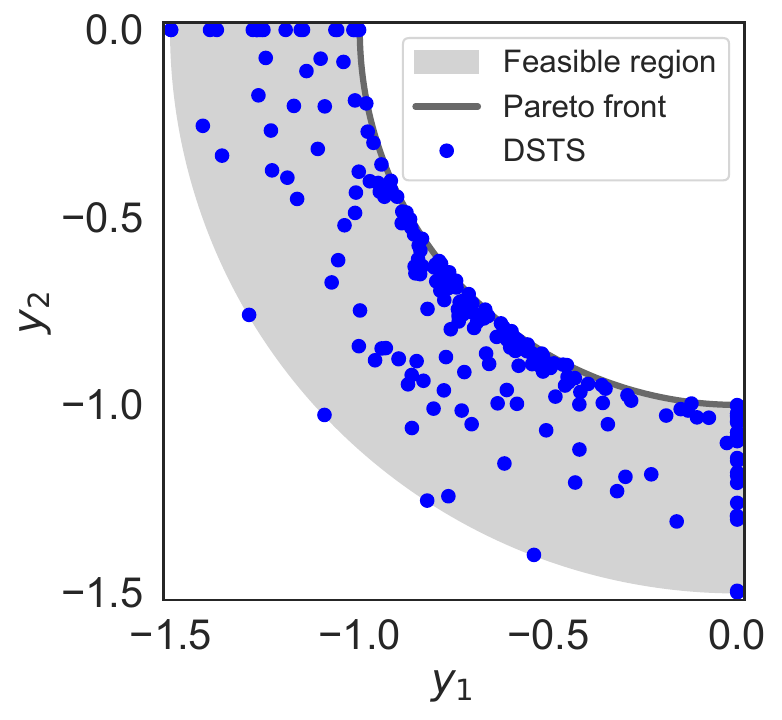}
    \caption{DSTS}
    \label{fig:algof}
  \end{subfigure}
  \caption{Illustration of sampled designs for the DTLZ2 test function. These figures show that our proposed method (DSTS) provides a better exploration of the Pareto front than its competitors.} 
 \label{fig:algo_examples}
  \vspace{-0.5cm}
\end{figure*}

\subsection{Test problems}
\label{sec:testproblems}

We report performance across four synthetic test problems (DTLZ1, DTLZ2, Vehicle Safety, and Car Side Impact), a simulated autonomous driving policy design task (Autonomous Driving), and a simulated exoskeleton gait design task (Exoskeleton) using queries with $q=2$ and $q=4$ designs. Details of these test problems are provided in Section \ref{sec:testproblems}. In all problems, an initial data set is obtained using $2(d+1)$ queries chosen uniformly at random over $\X^q$, where $d$ is the input dimension of the problem. After this initial stage, each algorithm was used to select 100 additional queries sequentially. Results for $q=2$ are shown in Figure~\ref{fig:results}. Each plot shows the mean of the hypervolume of the designs included in queries thus far, plus and minus 1.96 times the standard error. Each experiment was replicated 30 times using different initial data sets. In all problems, the DM's responses are corrupted by moderate levels of Gumbel noise, which is consistent with the use of a Logistic likelihood (see Appendix~\ref{app:noise} for the details). Results for $q=4$ can be found in Appendix~\ref{app:q4}.

\paragraph{DTLZ1 and DTLZ2}
The \textit{DTLZ1} and \textit{DTLZ2} functions are standard test problems from the multi-objective optimization literature \citep{deb2005scalable}. DTLZ1 has $d=6$ design variables and $m=2$ objectives. DTLZ2 has $d=3$ design variables and $m=2$ objectives. Results for these problems are shown in Figures~\ref{fig:resulta} and ~\ref{fig:resultb}, respectively. Our approach achieves the best performance in both problems, tied with qParEGO on DTLZ2. 

Surprisingly, on the DTLZ2 problem, three benchmarks (PBO-DTS-IF, qEHVI, and qMES) underperform significantly, even being surpassed by Random. To understand this poor performance, we plot a representative set of objective vectors corresponding to the queried designs in Figure~\ref{fig:algo_examples}. As illustrated, Random offers a reasonable exploration of the Pareto front (likely due to the low dimensionality of DTLZ2). However, it exposes the user to many low-quality designs, which can potentially frustrate DMs. PBO-DTS-IF and qMES tend to favor designs where one of the objectives achieves its maximum possible value, which may be problematic for DMs seeking more balanced solutions. qEHVI fails to explore the Pareto front, concentrating its queries on a limited sub-optimal region instead. qParEGO provides a reasonable exploration of the Pareto front, but fails to explore a portion near $y_2=0$. Overall, DSTS provides the most thorough exploration of the Pareto front.

\paragraph{Vehicle Safety and Car Side Impact} The \textit{Vehicle Safety} and \textit{Car Side Impact} test functions are designed to emulate various metrics of interest in the context of crashworthiness vehicle design. Overall, these test problems emulate an expert's assessment based on expensive experiments where cars are intentionally crashed, and safety metrics are evaluated. Vehicle Safety has $d=5$ design variables and $m=3$ objectives. Car Side Impact  has $d=7$ design variables and $m=4$ objectives.  For further details, we refer the reader to \citet{tanabe2020easy}. Results for the Vehicle Safety and Car Side Impact experiments can be found in Figures~\ref{fig:resultc} and ~\ref{fig:resultd}, respectively. For the Vehicle Safety problem, qMES is the best-performing algorithm, followed by DSTS. For the Car Side Impact, DSTS performs the best, followed closely by qMES.

\paragraph{Autonomous Driving Policy Design}
To supplement the synthetic test functions, we further evaluate our algorithm on a simulated autonomous driving policy design task. For this problem, we use a modification of the Driver environment presented in \citet{biyik2019asking}. A similar environment was also used by \citet{bhatia2020preference}, providing empirical evidence that user preferences in this context are inherently governed by multiple latent objectives. In our modified environment, illustrated in Figure \ref{fig:car_sim}, an autonomous control policy is created to drive a trailing (red) vehicle forward to a goal location while maintaining some minimum distance with a leading (white) vehicle. The control policy switches between two modes, collision avoidance and goal-following, based on a minimum distance threshold. The behavior of the leading car is fixed by setting a pre-specified set of actions. 

Using this simulation environment, we consider four objectives representing approximations of subjective notions of safety and performance: lane keeping, speed tracking, heading angle, and collision avoidance. 
The design space is parameterized by four control variables: two parameters that account for how fast the vehicle approaches the goal or the other vehicle, respectively, one position gain that accounts for the adjustment on the desired heading, and the minimum distance threshold used to switch between the two modes. The results of this experiment are shown in Figure~\ref{fig:resulte}. As illustrated, our approach again delivers better performance than its competitors.

\paragraph{Exoskeleton Gait Customization}
Lastly, we evaluate our algorithm on an exoskeleton gait personalization task using a high-fidelity simulator of the lower-body exoskeleton Atalante \citep{kerdraon2021evaluation}, illustrated in Figure \ref{fig:exo_sim}. This problem emulates the scenario discussed in the introduction, in which there are two conflicting objectives: subjective user comfort and energy efficiency.
For simulation purposes, we approximate comfort as a linear combination of three attributes: average walking speed (faster speed is preferred), maximum pelvis acceleration (lower peak acceleration is preferred), and the center of mass tracking error (lower error is preferred). We approximate total energy consumption as the $l^2$-norm of joint-level torques, averaged over the simulation duration. We note that this is an observable objective. Thus, our approach is modified as discussed in Section~\ref{sec:algo} and further elaborated on Appendix~\ref{app:model} to leverage direct observations of this objective.  

The design space is parameterized by five gait features: step length, minimum center of mass position with respect to stance foot in sagittal and coronal plane,  minimum foot clearance, and the percentage of the gait cycle at which minimum foot clearance is enforced. Each unique set of features corresponds to a unique gait. These gaits are synthesized using the FROST toolbox \citep{hereid2017frost} and are simulated in Mujoco to obtain the corresponding objectives. Since simulations are time-consuming, we build surrogate objectives by fitting a (regular) Gaussian process to the objectives obtained from 1000 simulations, with each set of gait features drawn uniformly over the design space. As shown in Figure~\ref{fig:resultf}, DSTS achieves the best performance, followed closely by qParEGO and qMES. 

\subsection{Discussion}
Across the broad range of problems considered, our algorithm (DSTS) delivers the best overall performance. Specifically, DSTS yields the highest hypervolume in nearly all problems (except for the Vehicle Safety problem, where it is second to qMES). Two of the standard multi-objective benchmarks, qParEGO and qMES, exhibit mixed results, highlighting the importance of developing algorithms designed to handle preference feedback as opposed to naively adapting algorithms intended for observable objectives. Notably, qEHVI is the worst-performing algorithm, even surpassed by Random. This is consistent with Theorem~\ref{thm:qehvi}, which shows that qEHVI is not consistent in general, thus highlighting the value of our asymptotic consistency result for DSTS (Theorem~\ref{thm:ac}). Lastly, PBO-DTS-IF consistently underperforms DSTS, confirming that a single-objective PBO approach is insufficient to explore the optimal trade-offs in problems with multiple conflicting objectives. 
The runtimes of all methods are discussed in Appendix~\ref{app:runtimes}.

\section{Conclusion}
\label{sec:concl}
In this work, we proposed a framework for PBO with multiple latent objectives, where the goal is to help DMs efficiently explore the objectives' Pareto front guided by preference feedback.   
Within this framework, we introduced dueling scalarized Thompson sampling (DSTS), which, to our knowledge, is the first approach for PBO with multiple objectives. Our experiments demonstrate that DSTS provides significantly better exploration of the Pareto front than several benchmarks across six test problems, including simulated autonomous driving policy design and exoskeleton gait customization tasks. Moreover, we showed that DSTS is asymptotically consistent, providing the first convergence result for dueling Thompson sampling in PBO.

While our work provides a sound approach to tackling important applications not covered by existing methods, there are also a few limitations that suggest avenues for future exploration. Future work could include a deeper theoretical analysis of DSTS, such as investigating convergence rates and regret bounds, as well as the development of alternative sampling policies. For example, \citet{astudillo2023qeubo} provided an efficient approach to approximate a one-step lookahead Bayes optimal policy in single-objective PBO, demonstrating superior performance against various established benchmarks. Although their approach cannot be easily adapted to our context, exploring alternative mechanisms for computing non-myopic sampling policies in our setting would be valuable. Finally, it would be interesting to explore DSTS in other settings, such as preference-based reinforcement learning.

\bibliography{bib}
\bibliographystyle{icml2024}




\appendix

\setcounter{theorem}{0}

\section{Additional related work}
\label{app:related}
Emerging from the operations research community, the field of multi-criteria decision analysis (MCDA) focuses on decision-making under multiple conflicting criteria \citep{keeney1993decisions, pomerol2000multicriterion}. Although our work is related to this field, it diverges from the traditional MCDA approaches, which often involve aggregating preferences across criteria into a single performance measure \citep{young1974note, dyer1979group, baskin2009preference, baumeister2016preference,bhatia2020preference}. Such aggregation requires additional assumptions about the DM's desired trade-offs, which our framework intentionally avoids to retain the ability to explore the Pareto front. Additionally, methods in this field have been explored outside the PBO framework, making them not directly applicable to our setting. 

\section{Proofs of theoretical results}
\label{sec:proofs}

\subsection{Proof of Theorem 1}

We first introduce the following notation. Let $\mathcal{F}_n$ denote the $\sigma$-algebra generated by  $\D_{n-1}$ and $\mathcal{F}_\infty$ denote the minimal $\sigma$-algebra generated by $\{\mathcal{F}_n\}_{n=1}^\infty$. We denote the conditional probability measures induced by $\mathcal{F}_n$ and $\mathcal{F}_\infty$ by $\prob_n$ and $\prob_\infty$, respectively.

Before proving Theorem~\ref{thm:ac}, we prove the following lemmas.

\begin{lemma}
\label{lemma1}
If $x\in \X_f^*$, then there exists $\Psi \subset \Theta$ such that $x$ is the only element of $\argmax_{x'\in\X}s(f(x') ; \psi)$ for all $\psi\in\Psi$ and $\prob(\theta\in\Psi) > 0$, where $\theta$ is a uniform random variable over $\Theta$.
\end{lemma}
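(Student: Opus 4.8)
The plan is to exhibit, for a given Pareto-optimal design $x$, an explicit weight vector $\theta^*$ in the relative interior of $\Theta$ at which $f(x)$ is the \emph{strict} unique maximizer of the scalarization, and then to propagate this strict inequality to a full-dimensional neighborhood of $\theta^*$ by continuity. Throughout I use that $\X$ is finite (as assumed in Theorem~\ref{thm:ac}), so there are only finitely many objective vectors to separate, and I assume---as is standard when using this form of Chebyshev scalarization---that the objectives are normalized to take strictly positive values, so that $f_j(x)>0$ for all $j$. I also assume the Pareto-optimal objective vectors are distinct, without which no $x$ could ever be the \emph{only} element of an argmax set.

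For the construction, set $\theta^*_j = (1/f_j(x)) / \sum_{k=1}^m (1/f_k(x))$, which lies in the relative interior of $\Theta$ and satisfies $\theta^*_j f_j(x) = c$ for every $j$, where $c := (\sum_k 1/f_k(x))^{-1} > 0$; in particular $\min_j \theta^*_j f_j(x) = c$. The key step is to show $\min_j \theta^*_j f_j(x') < c$ for every $x'$ with $f(x')\neq f(x)$. Indeed, since $x\in\X_f^*$, no feasible design dominates $x$, so it cannot be that $f_j(x')\geq f_j(x)$ for all $j$ unless $f(x')=f(x)$ (otherwise $x'\succ_f x$); hence, for $x'$ with $f(x')\neq f(x)$, there is some $j$ with $f_j(x')<f_j(x)$, giving $\theta^*_j f_j(x')<\theta^*_j f_j(x)=c$ and therefore $\min_k \theta^*_k f_k(x')<c$. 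Thus the pure Chebyshev term strictly separates $f(x)$ from every distinct objective vector. Since the augmentation term $\rho\sum_j\theta_j y_j$ is only a bounded perturbation and $\rho$ is taken small (exactly as required for the converse of Theorem~3.4.6 of \citet{miettinen1999nonlinear} to hold), this strict separation persists for the full scalarization: writing $\Delta := \min_{x':\,f(x')\neq f(x)} \big(c-\min_k \theta^*_k f_k(x')\big) > 0$ and bounding the augmentation difference over the finite design set, one obtains $s(f(x);\theta^*) > s(f(x');\theta^*)$ for all $x'\neq x$ once $\rho$ is small enough. Hence $x$ is the unique maximizer of $s(f(\cdot);\theta^*)$.

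To finish, I would use continuity: each map $\theta\mapsto s(f(x');\theta)$ is continuous (a minimum of finitely many linear functions plus a linear function), so each of the finitely many strict inequalities $s(f(x);\theta)>s(f(x');\theta)$ holds on a relatively open subset of $\Theta$ containing $\theta^*$. Taking $\Psi$ to be their finite intersection yields a relatively open set on which $x$ remains the unique argmax. Because $\theta^*$ lies in the relative interior of $\Theta$ and $\Psi$ is relatively open and nonempty, $\Psi$ has positive $(m-1)$-dimensional Lebesgue measure, and therefore $\prob(\theta\in\Psi)>0$ for $\theta$ uniform on $\Theta$, as claimed.

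The main obstacle is obtaining \emph{strict} uniqueness together with \emph{interiority} of $\theta^*$ simultaneously: Miettinen's converse only guarantees that each Pareto point is \emph{some} maximizer (possibly tied, possibly attained only at a boundary $\theta$ of measure zero), which is insufficient for a positive-measure $\Psi$. The explicit reciprocal weighting $\theta^*_j\propto 1/f_j(x)$ resolves both at once, but it is precisely here that the positivity normalization is needed (the weights must be well-defined and positive) and where distinctness of objective vectors becomes essential. Degenerate Pareto points on the boundary of the positive orthant (some $f_j(x)=0$) are the delicate case and would require a separate perturbation argument or an additional normalization assumption.
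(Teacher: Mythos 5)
Your proof is correct under the hypotheses you flag, and it takes a genuinely different route from the paper's. The paper's proof is essentially two lines: it invokes Theorem 3.4.6 of \citet{miettinen1999nonlinear} as a black box to produce some $\theta\in\Theta$ at which $x$ is the \emph{unique} maximizer of $s(f(\cdot);\theta)$, and then runs exactly the continuity-plus-positive-measure argument you use at the end (finitely many strict inequalities, each preserved on a relatively open set of $\theta$, intersected to form $\Psi$). You instead construct the weight vector explicitly, $\theta^*_j\propto 1/f_j(x)$, prove strict separation of the pure min term from every distinct feasible objective vector using only Pareto optimality of $x$, and absorb the augmentation term by taking $\rho$ below a problem-dependent gap $\Delta$. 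What your construction buys is transparency about where the scalarization's hidden hypotheses actually bite: (i) positivity (or normalization) of the objective values, without which the reciprocal weights are undefined --- and this is not an artifact of your method, since with mixed-sign objectives the lemma as literally stated can fail: take $m=2$ and feasible vectors $(10,-1)$, $(1,-0.9)$, $(100,-1.1)$; the design with vector $(10,-1)$ is Pareto optimal, yet for every $\theta\in\Theta$ and every $\rho>0$ one of the other two designs strictly beats it under $s(\cdot\,;\theta)$, because the min term equals $-\theta_2\lvert y_2\rvert$ for all three points and the comparison degenerates; (ii) distinctness of objective vectors, without which no design can ever be the \emph{only} element of the argmax; and (iii) smallness of $\rho$ relative to $f$, which the paper also needs but leaves buried in the citation (Miettinen's theorem is stated for achievement scalarizations measured from a dominating reference point, which builds in the sign condition that the paper's unreferenced form of $s$ lacks). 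So your added assumptions are not a weakness of your proof; some such conditions are needed for the lemma to be true at all, and your argument makes them visible while the paper's does not. One small correction to your closing discussion: interiority of $\theta^*$ is not needed for $\prob(\theta\in\Psi)>0$. Since $\prob$ is the uniform (relative) measure on the simplex $\Theta$, any nonempty relatively open subset of $\Theta$ has positive measure even if it is centered on the boundary; this is why the paper's argument, which gives no control over where Miettinen's $\theta$ lives, still goes through on this point.
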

\begin{proof}
From Theorem 3.4.6 in \cite{miettinen1999nonlinear}, we know that $x \in \X_f^*$ ( i.e. $x$ is properly Pareto optimal) if and only if there exists $\theta \in \Theta$ such that $x$ is the only element of $\argmax_{x'\in\X}s(f(x') ; \theta)$.
As a result, since $x \in \X_f^*$, there exists at least one $\theta \in \Theta$ such that $x$ is the unique maximizer of $s(f(x') ; \theta)$. 
Since $\X$ is finite, the function $s(f(x') ; \theta)$ is continuous with respect to $\theta$ and $x$ is the unique maximizer at $\theta$, by  continuity, there exists a neighborhood around $\theta$ where $x$ remains the unique maximizer. Let $\Psi$ be this neighborhood around $\theta$.
 Since $\theta$ is uniformly distributed over $\Theta$, the set $\Psi$, being non-empty, has positive measure. Hence $\prob(\theta \in \Psi) > 0$.
\end{proof}

\begin{lemma}
\label{lemma2}
Suppose that the assumptions of Theorem~\ref{thm:ac} hold and let $x\in\X$ be any point with $\prob_\infty(x\in \X_f^*) > 0$. Then, $f_j(x) - f(x_\mathrm{ref})$ is $\mathcal{F}_\infty$-measurable for $j=1,\ldots, m$.  
\end{lemma}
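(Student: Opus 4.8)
The plan is to prove the statement by showing that, on the event $B:=\{\prob_\infty(x\in\X_f^*)>0\}$, the modified policy compares $x$ against the fixed reference $x_\mathrm{ref}$ infinitely often, and that these repeated comparisons asymptotically reveal each pairwise preference probability and hence, through the invertibility of the logistic link, the differences $f_j(x)-f_j(x_\mathrm{ref})$. The two workhorses are Lévy's upward martingale convergence theorem (to pass from the per-step posterior $\prob_n$ to the limit $\prob_\infty$) and the conditional (second) Borel--Cantelli lemma (to convert a divergent sum of conditional selection probabilities into an almost-sure infinitely-often statement).

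First I would control how often the algorithm selects $x$ as the nondefault design. Writing $\pi_n(x):=\prob_n(x_{n,1}=x)$ and letting $g(f):=\mu(\{\theta\in\Theta: x \text{ is the unique maximizer of } s(f(\cdot);\theta)\})$ with $\mu$ the uniform law on $\Theta$, the key identity is $\pi_n(x)=\E[g(f)\mid\mathcal F_n]$: because the sampled $\tilde f_{n,1}$ is drawn from the posterior $p_n$ (the conditional law of $f$ given $\mathcal F_n$) and $\tilde\theta_n$ is an independent uniform draw, the conditional probability that $x$ maximizes the scalarization is exactly the posterior mean of the deterministic functional $g$. By Lemma~\ref{lemma1} together with the forward direction of the scalarization theorem, $g(f)>0$ if and only if $x\in\X_f^*$. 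Lévy's theorem then gives $\pi_n(x)\to\E[g(f)\mid\mathcal F_\infty]$ almost surely, and since $g\ge 0$ vanishes exactly off $\{x\in\X_f^*\}$, a short measure-theoretic argument shows this limit is strictly positive wherever $\prob_\infty(x\in\X_f^*)=\E[\mathbf 1\{x\in\X_f^*\}\mid\mathcal F_\infty]>0$. Hence $\sum_n\pi_n(x)=\infty$ almost surely on $B$.

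Next I would invoke the $\delta$-modification. Since the query equals $(x,x_\mathrm{ref})$ whenever $x_{n,1}=x$ and the reference coin fires, the conditional probability of this event is at least $\delta\,\pi_n(x)$, whose sum diverges on $B$; the conditional Borel--Cantelli lemma then yields that $x$ is compared to $x_\mathrm{ref}$ infinitely often, almost surely on $B$. Finally, with $q=2$ the logistic likelihood makes the objective-$j$ outcome of each such comparison a Bernoulli variable with success probability $p_j:=\sigma\big((f_j(x)-f_j(x_\mathrm{ref}))/\lambda_j\big)$ that is conditionally independent given $f$. Running a bounded-increment martingale strong law for the differences (outcome $-\,p_j$) along the comparison times shows the empirical preference frequency converges almost surely to $p_j$. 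This empirical frequency is $\mathcal F_\infty$-measurable, so its limit $p_j$ is as well, and inverting the strictly increasing link with fixed $\lambda_j$ gives $f_j(x)-f_j(x_\mathrm{ref})=\lambda_j\,\sigma^{-1}(p_j)$, which is therefore $\mathcal F_\infty$-measurable for every $j$.

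The main obstacle is the first step: translating the hypothesis about Pareto-optimality of the latent $f$ into a genuine lower bound on how frequently the algorithm queries $x$. The identification $\pi_n(x)=\E[g(f)\mid\mathcal F_n]$ is what makes this tractable, since it lets Lévy's theorem and Lemma~\ref{lemma1} combine without requiring a (nonexistent) uniform lower bound on the $\theta$-measure $g(f)$. A secondary point requiring care is that $f$ is not $\mathcal F_\infty$-measurable, so the strong law in the last step must be run on the enlarged filtration $\sigma(\mathcal F_n, f)$ (equivalently, conditionally on $f$), and the role of the fixed anchor $x_\mathrm{ref}$---forced by the nonlinearity of the Chebyshev scalarization---is precisely to guarantee a single design against which all the revealed differences can be measured.
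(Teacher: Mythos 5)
Your proposal is correct and follows essentially the same route as the paper's proof: use Lemma~\ref{lemma1} plus martingale convergence to lower-bound the probability that the modified policy queries $(x,x_\mathrm{ref})$, conclude via (conditional) Borel--Cantelli that this comparison occurs infinitely often, and then recover $f_j(x)-f_j(x_\mathrm{ref})$ as the $\mathcal{F}_\infty$-measurable limit of empirical preference frequencies inverted through the logistic link. If anything, your write-up is more careful than the paper's at the points it glosses over --- working explicitly on the event $\{\prob_\infty(x\in\X_f^*)>0\}$ rather than treating that random quantity as a constant, naming the conditional Borel--Cantelli lemma where the paper just asserts ``it follows that the event occurs infinitely often,'' and noting that the law of large numbers must be applied conditionally on $f$ (on the enlarged filtration) since the comparison times are random --- but the underlying argument is the same.
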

\begin{proof}
  Making a slight abuse of notation, we write $x = \argmax_{x'\in\X}s(f(x') ; \theta)$ if $x$ is the only element of $\argmax_{x'\in\X}s(f(x') ; \theta)$. 
  
  Let $\theta$ denote a uniform random variable over $\Theta$ independent of $\mathcal{F}_\infty$. The existence of such a $\theta$ is guaranteed by Kolmogorov's extension theorem. 
  
  From Lemma~\ref{lemma1}, there exists a $\Psi\subset \Theta$ (depending on $f$) such that $\prob_\infty(x = \argmax_{x'\in\X}s(f(x') ; \psi) \ \forall \ \psi\in\Psi) > 0$ and $\prob_\infty(\theta\in\Psi) > 0$.

  A standard Martingale argument shows that $$\lim_n \prob_n(x = \argmax_{x'\in\X}s(f(x') ; \psi) \ \forall \ \psi\in\Psi) = \prob_\infty(x = \argmax_{x'\in\X}s(f(x') ; \psi) \ \forall \ \psi\in\Psi) > 0$$ almost surely.
  
This convergence to a positive limit implies that for sufficiently large $n$, the conditional probability is bounded below by some  $\epsilon_1 > 0$ such that $\prob_n(x = \argmax_{x'\in\X}s(f(x') ; \psi) \ \forall \ \psi\in\Psi) > \epsilon_1$. Similarly, we can find $\epsilon_2$ such that $\prob_n(\theta\in\Psi) > \epsilon_2$ for all $n$ large enough.

Under the modified DSTS policy where the $x_{n,2}$ defaults to $x_\mathrm{ref}$ with probability $\delta$, we have $$\prob_n(X_n = (x, x_\mathrm{ref})) \geq \prob_n(x = \argmax_{x'\in\X}s(f(x') ; \psi) \ \forall \ \psi\in\Psi) \prob_n(\theta\in\Psi) \delta > \epsilon_1 \epsilon_2 \delta$$ for all $n$ large enough.

It follows that the event $X_n = (x, x_\mathrm{ref})$ occurs for infinitely many $n$. Let $n_1, n_2, \ldots$ denote the sequence of indices such that $X_{n_k} = (x, x_\mathrm{ref})$. Since $q=2$, 
\begin{align*}
\E[r_j(X_{n_k}) -1 \mid f_j] &= \prob(r_j(X_{n_k}) = 2)\\
&=\frac{\exp(f_j(x_\mathrm{ref})/\lambda_j)}{\exp(f_j(x)/\lambda_j) + \exp(f_j(x_\mathrm{ref})/\lambda_j)}.    
\end{align*}
Thus, 
 \begin{align*}
 \lim_{K\rightarrow\infty}\frac{1}{K}\sum_{k=1}^K (r_j(X_{n_k}) - 1) &= \frac{\exp(f_j(x_\mathrm{ref})/\lambda_j)}{\exp(f_j(x)/\lambda_j) + \exp(f_j(x_\mathrm{ref})/\lambda_j)}\\
 &= \frac{1}{1 + \exp((f_j(x) - f_j(x_\mathrm{ref}))/\lambda_j)}
 \end{align*}
almost surely by the law of large numbers. We deduce from this that $f_j(x) - f_j(x_\mathrm{ref})$ is $\mathcal{F}_\infty$-measurable for $j=1,\ldots, m$.
\end{proof}

\begin{theorem}
Suppose that $\X$ is finite, $q=2$, and the sequence of queries $\{X_n\}_{n=1}^\infty$ is chosen according to the modified DSTS policy. Then, $\lim_{n\rightarrow\infty}\prob_n(x\in \X_f^*) = \mathbf{1}\{x\in \X_f^*\}$ almost surely for each $x\in\X$.
\end{theorem}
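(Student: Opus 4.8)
The plan is to reduce the claim to a single measurability statement and then use the lemmas to identify, in the limit, exactly the information about $f$ needed to decide Pareto optimality. First I would apply a martingale convergence argument: since $\prob_n(x\in\X_f^*) = \E[\mathbf{1}\{x\in\X_f^*\}\mid\mathcal{F}_n]$ is a bounded martingale, it converges almost surely to $\E[\mathbf{1}\{x\in\X_f^*\}\mid\mathcal{F}_\infty] = \prob_\infty(x\in\X_f^*)$. Hence it suffices to show that the event $\{x\in\X_f^*\}$ is $\mathcal{F}_\infty$-measurable, because then the conditional expectation collapses to the indicator itself and the stated limit follows.

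Next I would introduce the ``detectable'' set. For each $x'\in\X$ the quantity $\prob_\infty(x'\in\X_f^*)$ is $\mathcal{F}_\infty$-measurable, so $B_{x'} := \{\prob_\infty(x'\in\X_f^*)>0\}$ is an $\mathcal{F}_\infty$-event, and the random set $C := \{x' : \mathbf{1}_{B_{x'}} = 1\}$ is $\mathcal{F}_\infty$-measurable; since $\X$ is finite, each event $\{C=S\}$ with $S\subseteq\X$ lies in $\mathcal{F}_\infty$. Two facts drive the argument. First, a conditional-expectation pull-out shows $\mathbf{1}\{x'\in\X_f^*\}\,\mathbf{1}_{B_{x'}^c}=0$ almost surely, so the true Pareto set satisfies $\X_f^*\subseteq C$ almost surely, and any $x'\notin C$ is almost surely not Pareto optimal. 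Second, Lemma~\ref{lemma2} gives that on $B_{x'}$ the differences $f_j(x') - f_j(x_\mathrm{ref})$ are $\mathcal{F}_\infty$-measurable for all $j$; consequently, on a fixed atom $\{C=S\}$, every pairwise difference $f_j(x') - f_j(x'')$ with $x',x''\in S$ is $\mathcal{F}_\infty$-measurable, so the entire Pareto-dominance structure restricted to $S$ is recoverable from $\mathcal{F}_\infty$.

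I would then combine these on each atom $\{C=S\}$. Using finiteness of $\X$ together with transitivity of Pareto dominance, every dominated point is dominated by some Pareto-optimal point; since $\X_f^*\subseteq S$ almost surely, this forces the Pareto-optimal set of the restricted problem $(S,f)$ to coincide with $\X_f^*$. Because that restricted Pareto set is a function solely of the within-$S$ pairwise differences, which are $\mathcal{F}_\infty$-measurable on $\{C=S\}$, the indicator $\mathbf{1}\{x\in\X_f^*\}$ is $\mathcal{F}_\infty$-measurable on $\{C=S\}$ (and equals $0$ whenever $x\notin S$). As the events $\{C=S\}$ form a finite $\mathcal{F}_\infty$-measurable partition, $\mathbf{1}\{x\in\X_f^*\}$ is globally $\mathcal{F}_\infty$-measurable, which completes the proof via the first step.

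The hard part is the passage from Lemma~\ref{lemma2}, which recovers objective differences \emph{only} for points with strictly positive limiting posterior probability of Pareto optimality, to a conclusion about all of $\X$. The two genuine obstacles are (i) certifying that the true Pareto set lies almost surely inside the detectable set $C$, which the pull-out computation delivers, and (ii) ruling out that an undetectable point secretly dominates $x$ in a way we cannot reconstruct, which is resolved by transitivity and finiteness: any dominator of $x$ can be replaced by a Pareto-optimal, hence detectable, dominator. One must also be careful that $C$ is itself random; conditioning on each atom $\{C=S\}$ is what keeps the dominance reconstruction measurable.
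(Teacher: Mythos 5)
Your proof is correct, and while it rests on the same two engines as the paper's proof---the martingale convergence step and Lemma~\ref{lemma2}---it organizes the argument in a genuinely different way. The paper argues pointwise and by contradiction: assuming $0 < \prob_\infty(x\in\X_f^*) < 1$ with positive probability, it picks a single witness $x'$ with $\prob_\infty(x'\succ_f x) > 0$ and $\prob_\infty(x'\in\X_f^*) > 0$ (implicitly using the same finiteness-plus-transitivity fact you invoke, namely that any dominator of $x$ can be upgraded to a Pareto-optimal dominator), applies Lemma~\ref{lemma2} to both $x$ and $x'$ to make $f_j(x')-f_j(x)$ measurable, forces $\prob_\infty(x'\succ_f x)=1$, reaches a contradiction, and then still needs Doob's consistency theorem to pass from the resulting zero-one law to the identification $\prob_\infty(x\in\X_f^*)=\mathbf{1}\{x\in\X_f^*\}$. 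You instead prove directly the stronger statement that $\mathbf{1}\{x\in\X_f^*\}$ is (up to null sets) $\mathcal{F}_\infty$-measurable: the pull-out computation certifies $\X_f^*\subseteq C$ almost surely, Lemma~\ref{lemma2} reconstructs all pairwise differences within $C$, and the restricted-Pareto-set identity on each atom $\{C=S\}$ recovers the full dominance structure; the identification of the limit then follows from the definition of conditional expectation, with no contradiction and no appeal to Doob. Your route buys two things: first, it treats carefully the fact that the hypothesis of Lemma~\ref{lemma2} (``$\prob_\infty(x\in\X_f^*)>0$'') is itself a random event---your atom-wise conditioning on $\{C=S\}$ makes rigorous what the paper handles somewhat loosely---and second, it delivers global measurability of the entire Pareto set, a slightly stronger conclusion than the pointwise statement. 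The paper's route buys brevity: a single witness $x'$ suffices, and no bookkeeping with the random detectable set $C$ is required.
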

\begin{proof}
 A standard martingale argument shows that  $\lim_{n\rightarrow\infty}\prob_n(x\in \X_f^*) = \prob_\infty(x\in \X_f^*)$ almost surely. Thus, it remains to show that $\prob_\infty(x\in \X_f^*) = \mathbf{1}\{x\in \X_f^*\}$ almost surely.

 To prove this, we will show that the event $0< \prob_\infty(x\in \X_f^*) < 1$ occurs with probability zero. For the sake of contradiction, assume this event holds with positive probability. As we will see next, this yields a contradiction that holds almost surely.

 Since $\prob_\infty(x\in \X_f^*) < 1$, there must exist $x'\in\X$ such that $\prob_\infty(x'\succ_f x) > 0$. Moreover, we can choose $x'$ such that  $\prob_\infty(x'\in \X_f^*) > 0$.

Given $\prob_\infty(x\in \X_f^*) > 0$, $f_j(x) - f_j(x_\mathrm{ref})$ is $\mathcal{F}_\infty$-measurable for each $j=1,\ldots, m$ by Lemma~\ref{lemma2}. Similarly, since $\prob_\infty(x'\in \X_f^*) > 0$, $f_j(x') - f_j(x_\mathrm{ref})$ is $\mathcal{F}_\infty$-measurable. Thus, $$(f_j(x') - f_j(x_\mathrm{ref})) - (f_j(x) - f_j(x_\mathrm{ref})) = f_j(x') - f_j(x)$$ is $\mathcal{F}_\infty$-measurable for each $j=1,\ldots, m$. 
 
 Next, observe that $x'\succ_f x$ if and only if $\min_{j=1,\ldots, m} f_j(x') - f_j(x) > 0$. Hence, $\mathbf{1}\{x'\succ_f x\}$ is $\mathcal{F}_\infty$-measurable and $\prob_\infty(x'\succ_f x) = \mathbf{1}\{x'\succ_f x\}$ almost surely. 
 
Recall that $\prob_\infty(x'\succ_f x) > 0$. Thus, it must be the case that $\prob_\infty(x'\succ_f x)=1$. This implies that $\prob_\infty(x\in \X_f^*) = 0$, which contradicts our initial assumption that $0 < \prob_\infty(x \in \X_f^*) < 1$.

Finally, since $\prob_\infty(x\in \X_f^*)$ is 0 or 1 almost surely, from Doob's Bayesian consistency theorem \citep{doob1949application} we conclude that  $\prob_\infty(x\in \X_f^*) = \mathbf{1}\{x\in \X_f^*\}$.
\end{proof}

\subsection{Proof that DTS is asymptotically consistent for single-objective PBO}
\begin{theorem}
Suppose that $\X$ is finite, $q=2$, $m=1,$ and the sequence of queries $\{X_n\}_{n=1}^\infty$ is chosen according to the DTS policy. Then, $\lim_{n\rightarrow\infty}\prob_n(x\in \argmax_{x'\in\X}f(x')) = \mathbf{1}\{x\in \argmax_{x'\in\X}f(x')\}$ almost surely for each $x\in\X$.
\end{theorem}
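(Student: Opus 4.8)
The plan is to follow the same skeleton as the proof of Theorem~\ref{thm:ac}, while exploiting two simplifications available when $m=1$. First, the parameter set $\Theta$ collapses to the single point $\theta=1$, so the scalarization reduces to $s(f(x);\theta)=(1+\rho)f(x)$ and $\argmax_{x'\in\X}s(f(x');\theta)=\argmax_{x'\in\X}f(x')$; consequently Lemma~\ref{lemma1} is vacuous and plays no role. Second, and more importantly, the two designs in each query are now the argmaxes of two \emph{independent} posterior samples of the \emph{same} function, so their optimality probabilities factorize, which is precisely why no reference design $x_\mathrm{ref}$ is needed here. As in the multi-objective proof, a martingale argument gives $\lim_n \prob_n(x\in\argmax_{x'}f(x')) = \prob_\infty(x\in\argmax_{x'}f(x'))$ almost surely, so it suffices to show this limit equals $\mathbf{1}\{x\in\argmax_{x'}f(x')\}$, i.e. that $\prob_\infty(x\in\argmax_{x'}f(x'))\in\{0,1\}$ almost surely.

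The main step is a single-objective replacement for Lemma~\ref{lemma2}: if $x,x'\in\X$ both satisfy $\prob_\infty(x\in\argmax f)>0$ and $\prob_\infty(x'\in\argmax f)>0$, then $f(x)-f(x')$ is $\mathcal{F}_\infty$-measurable. I would prove this directly, without $x_\mathrm{ref}$. Since $\tilde f_{n,1},\tilde f_{n,2}\overset{\mathrm{iid}}{\sim}p_n$ and the posterior is continuous (so argmaxes are almost surely unique),
\[
\prob_n(X_n=(x,x')) = \prob_n(x\in\argmax\nolimits_{x''}f(x''))\,\prob_n(x'\in\argmax\nolimits_{x''}f(x'')).
\]
The martingale convergence of each factor to a positive limit lower-bounds this product by a positive constant for all large $n$, so $(x,x')$ is queried infinitely often almost surely. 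Along that (random) subsequence $n_1,n_2,\dots$, the responses are, conditionally on $f$, i.i.d.\ with $\prob(r(X_{n_k})=2\mid f)=(1+\exp((f(x)-f(x'))/\lambda))^{-1}$, and the conditional law of large numbers then recovers $f(x)-f(x')$ as an $\mathcal{F}_\infty$-measurable limit of empirical response frequencies.

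With this in hand, the contradiction argument mirrors Theorem~\ref{thm:ac}. Assuming $0<\prob_\infty(x\in\argmax f)<1$ on a positive-probability event, I would use a union bound over the finite set $\X$ to produce $x'$ with both $\prob_\infty(x'\in\argmax f)>0$ and $\prob_\infty(x'\succ_f x)>0$ (on $\{x\notin\argmax f\}$ the true maximizer strictly dominates $x$, so some $x'$ inherits both properties). The measurability step above then makes $f(x)-f(x')$, and hence $\mathbf{1}\{x'\succ_f x\}=\mathbf{1}\{f(x')>f(x)\}$, $\mathcal{F}_\infty$-measurable, forcing $\prob_\infty(x'\succ_f x)\in\{0,1\}$. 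Positivity forces the value $1$, whence $\prob_\infty(x\in\argmax f)=0$, a contradiction. Thus $\prob_\infty(x\in\argmax f)\in\{0,1\}$ almost surely, and Doob's Bayesian consistency theorem concludes the proof.

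The step I expect to be most delicate is the conditional law of large numbers along the random subsequence $\{n_k\}$: the indices at which $(x,x')$ is queried are $\mathcal{F}_\infty$-measurable and correlated with the history, so I would lean on the assumption that the logistic response noise is drawn independently at each interaction (and independently of $f$ and of the query-selection randomness) to treat the responses as conditionally i.i.d.\ given $f$. This is the same subtlety already present in Lemma~\ref{lemma2}, and the identical justification applies; the remaining ingredients, namely the factorization of $\prob_n(X_n=(x,x'))$ and the contradiction, are strictly simpler here than in the multi-objective setting.
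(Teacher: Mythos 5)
Your proposal is correct and takes essentially the same route as the paper's own proof: the factorization $\prob_n(X_n=(x,x')) = \prob_n(x\in\X_f^*)\,\prob_n(x'\in\X_f^*)$ from the independence of the two posterior samples, the resulting infinitely-often querying of $(x,x')$, the law-of-large-numbers argument making $f(x')-f(x)$ $\mathcal{F}_\infty$-measurable, the contradiction forcing $\prob_\infty(x\in\X_f^*)\in\{0,1\}$, and Doob's consistency theorem to finish. Your added remarks (the collapse of the scalarization when $m=1$ and the conditional-i.i.d.\ justification of the law of large numbers along the random subsequence) are points the paper leaves implicit, not departures from its argument.
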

\begin{proof}
 Observe that $\X_f^* = \argmax_{x'\in\X}f(x')$ in the single-objective setting. As in the proof of Theorem~\ref{thm:ac}, we will show that the event $0 < \prob_\infty(x\in \X_f^*) < 1$ occurs with probability zero by showing that a contradiction arises almost surely.

 Since $\prob_\infty(x\in \X_f^*) < 1$, there must exist a fixed $x'\in\X$ such that $\prob_\infty(f(x') > f(x)) > 0$. Moreover, we can choose $x'$ such that  $\prob_\infty(x'\in \X_f^*) > 0$. Similar to the proof of Lemma~\ref{lemma2}, this implies that we can find $\epsilon_x, \epsilon_x' > 0$ such that $\prob_n(x\in \X_f^*) > \epsilon_x$ and $\prob_n(x'\in \X_f^*) > \epsilon_{x'}$ for all $n$ large enough.

Now observe that under the DTS policy, $$\prob_n(X_n = (x, x')) = \prob_n(x\in \X_f^*)\prob_n(x'\in \X_f^*) > \epsilon_x \epsilon_{x'}$$ for all $n$ large enough (note that we have used that $x_{n,1}$ and $x_{n,2}$ are chosen independently). This implies that the event $X_n = (x, x')$ occurs infinitely often almost surely. Similar to the proof of Lemma~\ref{lemma2}, this implies in turn that $f(x') - f(x)$ is $\mathcal{F}_\infty$-measurable. From this it follows that $\prob_\infty(f(x') > f(x)) = \mathbf{1}\{f(x') > f(x)\}$. 

Now recall that $\prob_\infty(f(x') > f(x)) > 0$. Thus, it must be the case that $\prob_\infty(f(x') > f(x)) = 1$. However, this implies that $\prob_\infty(x\in \X_f^*) = 0$, which is a contradiction.
 
From the above, it follows that $\prob_\infty(x\in \X_f^*)$ is 0 or 1 almost surely. Similar to the proof of Theorem~\ref{thm:ac}, we conclude that $\prob_\infty(x\in \X_f^*) = \mathbf{1}\{x\in \X_f^*\}$ by virtue of Doob's Bayesian consistency theorem \citep{doob1949application}.
\end{proof}

\subsection{Proof that qEHVI is not asymptotically consistent}

\begin{theorem}
\label{thm:qehvi}
There exists a problem instance satisfying the assumption of Theorem~\ref{thm:ac} such that if $X_n\in \argmax_{X\in\X^q}\mathrm{qEHVI}_n(X)$ for all $n$, then $\lim_{n\rightarrow\infty}\prob_n(x\in \X_f^*) = t$ almost surely for some fixed $x\in\X$ and $t\in (0, 1)$.
\end{theorem}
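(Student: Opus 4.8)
The plan is to prove Theorem~\ref{thm:qehvi} by explicit construction: I would exhibit a single finite instance (with $q=2$) on which every qEHVI-maximizing sequence fails to resolve the Pareto status of one design. Concretely, I would take $\X=\{a,b,c\}$ with $m=2$ latent objectives and independent Gaussian-process priors across the three designs, together with a fixed ground truth $f$ and a reference vector $r$ (which is part of the qEHVI specification and hence ours to choose). The crucial structural feature is to place $c$ so that its objective vector $f(c)$ fails to dominate $r$ in one coordinate, while $a$ and $b$ both dominate $r$ and present a genuine trade-off. Because a point contributes to the hypervolume only if it dominates the reference, $c$ contributes zero hypervolume, so any candidate batch containing $c$ has the same expected hypervolume improvement as the batch obtained by replacing $c$ with a duplicate of its partner. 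This is the mechanism by which qEHVI ignores a design that may nonetheless be Pareto optimal in $\X$ (note that Pareto optimality among $\X$ is independent of $r$, so $c$ can be the unique maximizer of objective $1$ and hence lie in $\X_f^*$).

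First I would show that $X_n=(a,b)$ is the strict qEHVI-maximizer at every iteration. Up to order, the candidate batches are $(a,b)$, $(a,c)$, $(b,c)$, and the diagonals $(x,x)$. By the reference-point argument, $\mathrm{qEHVI}_n((a,c))$ equals the single-point improvement of $a$, and similarly for $(b,c)$ and the diagonals, whereas $\mathrm{qEHVI}_n((a,b))$ equals the joint improvement of the pair. Hence it suffices to argue that both $a$ and $b$ retain strictly positive probability of extending the current hypervolume for all $n$, so the joint batch strictly dominates every competitor. Here I would invoke a feature specific to PBO: the Logistic likelihood in Equation~\ref{eq:likelihood} is invariant to a per-objective additive shift, so preference feedback only ever identifies differences $f_j(a)-f_j(b)$ and never the absolute locations of the objectives. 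Consequently the marginal posterior predictive laws never degenerate to point masses, each of $a,b$ keeps positive extension probability, and $(a,b)$ remains the unique argmax. This establishes that $c$ is never queried.

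It then remains to track $\prob_n(c\in\X_f^*)$. Since $c$ never appears in a query, $\D_n$ is, for each objective $j$, a function of the comparisons of $a$ against $b$ alone; arguing as in the law-of-large-numbers step of Lemma~\ref{lemma2}, $\mathcal{F}_\infty$ identifies the differences $f_j(a)-f_j(b)$ but nothing relating $c$ to $a$ or $b$. By the martingale convergence theorem $\prob_n(c\in\X_f^*)\to\prob_\infty(c\in\X_f^*)$ almost surely, and since $\{c\in\X_f^*\}$ is the event that $c$ is dominated by neither $a$ nor $b$, its $\mathcal{F}_\infty$-conditional probability is governed by the prior law of the never-identified quantities $f_j(c)-f_j(a)$ and $f_j(c)-f_j(b)$. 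Choosing the prior so that this limit equals a deterministic constant $t\in(0,1)$, while $c$ is truly Pareto optimal (so consistency would force the limit to be $1$), completes the contradiction and proves the claim.

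I expect the main obstacle to be twofold, both in the second and third steps. The first difficulty is verifying that $(a,b)$ stays the qEHVI-maximizer uniformly over all datasets reachable with positive probability, rather than merely at one posterior; this needs a quantitative lower bound on the extension probabilities of $a$ and $b$ holding for every $n$, for which the non-identifiability of absolute objective values (and the resulting persistence of posterior variance) is the key lever. The second, more delicate, difficulty is to engineer the priors so that $\prob_\infty(c\in\X_f^*)$ is a genuine fixed constant in $(0,1)$ and not an $\mathcal{F}_\infty$-measurable random limit: because the identified contrast $f_j(a)-f_j(b)$ is statistically coupled to $f_j(c)-f_j(a)$ through the shared term $f_j(a)$, one must choose the prior covariance so as to decouple $f(c)$ from the identified contrasts, ensuring that conditioning on $\mathcal{F}_\infty$ leaves the dominance probability of $c$ unchanged.
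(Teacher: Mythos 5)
Your construction has a genuine gap at its central step. You claim that because the ground-truth $f(c)$ fails to dominate the reference $r$, ``$c$ contributes zero hypervolume'' and so any batch containing $c$ ties with the batch that replaces $c$ by a duplicate. But $\mathrm{qEHVI}_n$ is an expectation under the \emph{posterior}, not an evaluation at the ground truth: with a Gaussian-process prior, the posterior marginal of $f(c)$ is Gaussian with full support on $\R^m$ (and, since $c$ is never queried and is prior-independent of $(a,b)$, it never concentrates), so the event that $f(c)$ dominates $r$ and improves the hypervolume has strictly positive posterior probability at every $n$. Hence every batch containing $c$ has a strictly positive expected contribution from $c$, and your argument that $(a,b)$ is the strict maximizer---which must hold for \emph{every} tie-breaking selection from the argmax, uniformly over all reachable posteriors---collapses. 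This is not the quantitative refinement you flag as your ``first difficulty''; it is a conflation of the fixed ground truth with the posterior law, and it breaks the mechanism by which $c$ is supposed to be ignored. A second, independent obstruction: even if $c$ were never queried, with Gaussian priors the limit $\prob_\infty(c\in\X_f^*)$ is generically a nondegenerate $\mathcal{F}_\infty$-measurable random variable, because the dominance events $\{f_j(a)\geq f_j(c)\ \forall j\}$ and $\{f_j(b)\geq f_j(c)\ \forall j\}$ are tied to the identified contrasts $f_j(a)-f_j(b)$ through arithmetic identities; making the limit a deterministic constant $t$ requires almost-sure ordering constraints (e.g.\ $f_2(b) < f_2(c) < f_2(a)$ a.s.) that Gaussian marginals, having unbounded support, cannot satisfy. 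So the theorem as stated (constant $t\in(0,1)$) is out of reach for your prior class.

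For contrast, the paper sidesteps both problems by reducing to the single-objective case (where qEHVI collapses to qEI, eliminating the reference-point mechanism entirely) and reusing the discrete counterexample of \cite{astudillo2023qeubo}: a four-point design space with a prior supported on four explicit functions, where qEI provably queries the same pair $(3,4)$ forever and the posterior update has an explicit multiplicative form. The feedback from that pair multiplies $p_{n,1},p_{n,3}$ by one common factor and $p_{n,2},p_{n,4}$ by another, so the ratios $p_{n,1}/p_{n,3}$ and $p_{n,2}/p_{n,4}$ are conserved and $p_{n,3}+p_{n,4}=t$ for all $n$; the design $x=2$ is optimal exactly on $\{f(\cdot;3),f(\cdot;4)\}$, giving $\prob_n(x\in\X_f^*)=t$ identically. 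The lesson is that a discrete, finitely supported prior is what makes both the ``never learns the relevant event'' step and the ``limit is a constant'' step provable; your never-query-$c$ idea could plausibly be repaired along those lines (finite support keeping $f(c)$ below $r$ almost surely), but not with Gaussian-process priors.
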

\begin{proof}
For simplicity, we focus on the single-objective case. Our example can be easily extended to the multi-objective case, e.g., by augmenting the problem with dummy constant objectives. In this case, $X_f^* = \argmax_{x'\in\X}f(x')$ and the qEHVI acquisition function reduces to the qEI acquisition function \citep{siivola2021preferential}, defined by
\begin{equation}
    \mathrm{qEI}_n(x) = \E_n[\{f(x) - \mu_n^*\}^+],
\end{equation}
where $\mu_n^*$ is the maximum posterior mean value over designs previously shown.

We build on the example provided by \cite{astudillo2023qeubo}. Specifically, we let $\X = \{1, 2, 3, 4\}$ and consider the functions $f(\ \cdot \ ;k):\X\rightarrow\R$, for $k=1,2,3,4$, given by $f(1 ;k) = -1$ and $f(2 ;k) = 0$ for all $k$, and
\begin{align*}
    f(x;1) = \begin{cases}
    1, &\ x=3\\
    \frac{1}{2}, &\ x=4
    \end{cases},
\hspace{0.5cm}
f(x;2) = \begin{cases}
    \frac{1}{2}, &\ x=3\\
    1, &\ x=4
    \end{cases},
\end{align*}
\begin{align*}
f(x;3) = \begin{cases}
    -\frac{1}{2}, &\ x=3\\
    -1, &\ x=4
    \end{cases},
\hspace{0.5cm}
f(x;4) = \begin{cases}
    -1, &\ x=3\\
    -\frac{1}{2}, &\ x=4
    \end{cases}.
\end{align*}

Let $s$ be a number with $0 < s < 1/3$ and set $t=1-s$. We consider a prior distribution on $f$ with support $\{f(\ \cdot \ ;i)\}_{i=1}^4$ such that
\begin{align*}
 p_{0,k} = \prob(f=f(\ \cdot \ ;k)) =   \begin{cases}
                        s/2, \  k =1,2,\\
                        t/2, \ k=3,4.
                        \end{cases}
\end{align*}
We assume a logistic likelihood given by 
\begin{equation*}
    \prob\left(r(X_n)=i\mid f(X_n)\right) = \frac{\exp(f(x_{n,i})/\lambda)}{\exp(f(x_{n,1})/\lambda) + \exp(f(x_{n,2})/\lambda)}, \ i=1,2. 
\end{equation*}

From the proof of \cite{astudillo2023qeubo}, we know that $X_n = (3,4)$ for all $n$ and the posterior distribution evolves according to the equations
\begin{align*}
p_{n+1,k} \propto \begin{cases}
                        p_{n,k} w_n, \ k=1,3,\\
                         p_{n,k} (1 - w_n), \ k=2,4,
                        \end{cases}
\end{align*}
where $w_n = a \mathbf{1}\{r(X_n) = 1\} + (1-a)\mathbf{1}\{r(X_n) = 2\}$ and $a = \exp(1/2\lambda)$.

We will use the above to show that $p_{n,3} + p_{n,4} = t$ for all $n$. To this end, observe that the above equations imply that $p_{n,1}/p_{n,3} = p_{0,1}/p_{0,3} = s/t$ for all $n$. Similarly, $p_{n,2}/p_{n,4} = p_{0,2}/p_{0,4} = s/t$ for all $n$. Thus, $p_{n,1} = (s/t)p_{n,3}$ and $p_{n,2} = (s/t)p_{n,4}$ for all $n$.   Moreover,
\begin{align*}
    1 &= p_{n,1} + p_{n,2} + p_{n,3} + p_{n,4}\\
      &= (s/t)p_{n,3} + (s/t)p_{n,4} + p_{n,3} + p_{n,4}\\
      &= (p_{n,3} + p_{n,4})(1 + s/t).
\end{align*}
Thus,
\begin{align*}
p_{n,3} + p_{n,4} &= 1/(1 + s/t)\\
&= t.
\end{align*}

Finally, let $x=2$ and observe that $x\in \argmax_{x'\in\X}f(x')$ if and only if $f = f(\ \cdot \ ;3)$ or $f = f(\ \cdot \ ;4)$. Hence,
\begin{align*}
\prob_n\left(x\in \argmax_{x'\in\X}f(x')\right) &= \prob_n\left(f = f(\ \cdot \ ;3) \textnormal{ or } f = f(\ \cdot \ ;4)\right)\\
&= p_{n,3} + p_{n,4}\\
& = t,
\end{align*}
which ends the proof.
\end{proof}

\section{Additional experimental details and results}
\subsection{Additional details on our probabilistic model under observable objectives}
\label{app:model}
Suppose for concreteness that objective $j$ is observable. Then, at each iteration, $n$, we observe (potentially corrupted) measurements of $f_j(x_{n, 1}), \ldots, f_j(x_{n,q})$. Let $y_{n,1}, \ldots, y_{ n,q}$ denote these measurements, and let $\mathcal{D}_n = \{(x_{k,i}, y_{ k,i})\}_{k=1,\ldots, n, i=1,\ldots, q}$ denote all the measurements collected up to time $n$. As is standard in the BO literature, we can assume Gaussian noise such tat $y_{n,i} = f_j(x_{n,i}) + \epsilon_{n,i}$ where the terms $\epsilon_{n,i}\sim N(0, \sigma^2)$ are independent across $n$ and $i$. If, as before, we assume a Gaussian prior over $f_j$, then the posterior distribution of $f_j$ given $\mathcal{D}_n$ is again a Gaussian process distribution, whose mean and covariance functions can be computed using the standard Gaussian process regression equations (equations 2.23 and 2.24 of \citealp{williams2006gaussian}).

\subsection{Noise in the DM's responses}
\label{app:noise}
In our empirical evaluation, we simulate noise in the DM's responses using additive Gumbel noise. Concretely, if $X_n$ is the query presented to the user at iteration $n$, then the response observed by DSTS is $r(X_n) = [r_1(X_n),\ldots, r_m(X_n)]$, where
\begin{equation*}
 r_j(X_n) = \argmax_{i=1,\ldots, q} f_j(x_{n,i}) + \epsilon_{n,i,j},   
\end{equation*}
and $\epsilon_{n,i,j}\sim\mathrm{Gumbel}(0, \lambda_{j}^\mathrm{true})$ for $i=1,\ldots, q$ and $j=1,\ldots, m$ are independent. Note that, under this choice, we have
\begin{align*}
   \prob\left(r_j(X_n)=i\mid f_j(X_n)\right) = \frac{\exp(f_j(x_{n,i})/\lambda_{j}^\mathrm{true})}{\sum_{i'=1}^q \exp(f_j(x_{n,i'})/\lambda_{j}^\mathrm{true})}, 
\end{align*}
for $i=1,\ldots,q$, i.e., this recovers a Logistic likelihood.

Following \citet{astudillo2023qeubo}, in each problem, for every objective, $\lambda_{j}^\mathrm{true}$ is chosen such that, on average, the DM makes a mistake 20\% of the time when comparing pairs of designs among those with the top 1\% objective values within $\X$. We estimate this percentage by uniformly sampling a large number of design points over 
$\X$.

Noise in the DM's responses observed by PBO-DTS-IF is generated by scalarizing the corrupted objective values defined above using a Chebyshev scalarization. Concretely, if $r^\diamondsuit(X_n)$ denotes the response observed by PBO-DTS-IF when presented with query $X_n$, then  
\begin{equation*}
r^\diamondsuit(X_n) = \argmax_{i=1,\ldots, q} s(f(x_{n,i}) + \epsilon_{n,i} ; \tilde{\theta}_n),  
\end{equation*}
where $\epsilon_{n,i} = [\epsilon_{n,i,1},\ldots, \epsilon_{n,i,m}]$, $\epsilon_{n,i,j}$ is defined as before, and $\tilde{\theta}_n$ is drawn uniformly from $\Theta$.

\begin{figure*}[ht]
  
  \begin{subfigure}{0.33\textwidth}
    \centering
    \includegraphics[width=\linewidth]{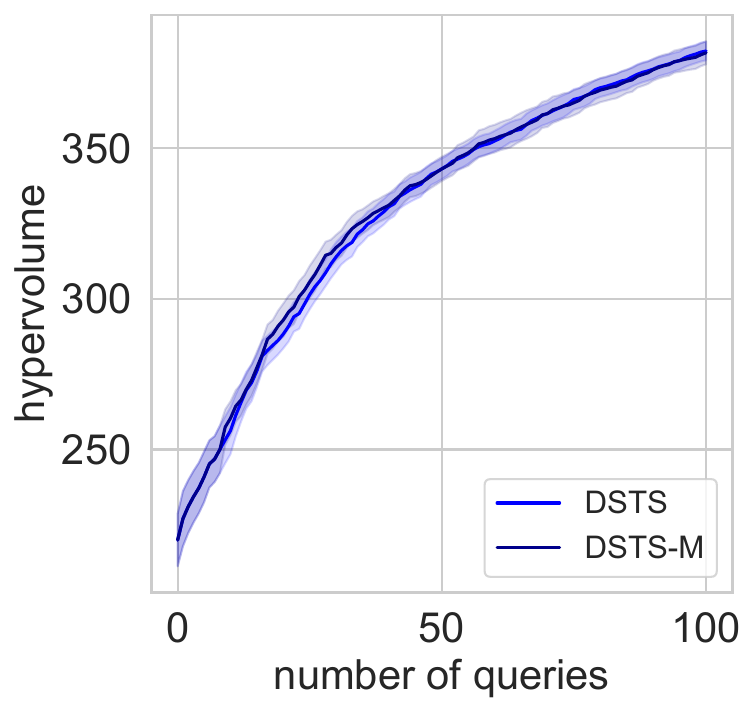}
    \caption{Car Side Impact}
  \end{subfigure}
  \hspace*{\fill}   
  \begin{subfigure}{0.31\textwidth}
  \label{fig:result3d}
    \includegraphics[width=\linewidth]{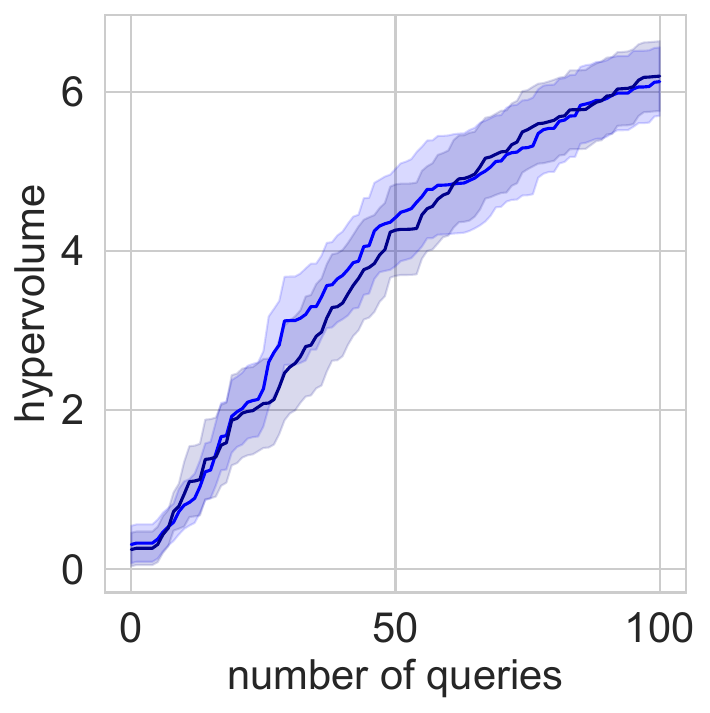}
    \caption{Autonomous Driving}
    \label{fig:result3e}
\end{subfigure}
\hspace*{\fill}   
\begin{subfigure}{0.335\textwidth}
  \label{fig:result3e}
    \includegraphics[width=\linewidth]{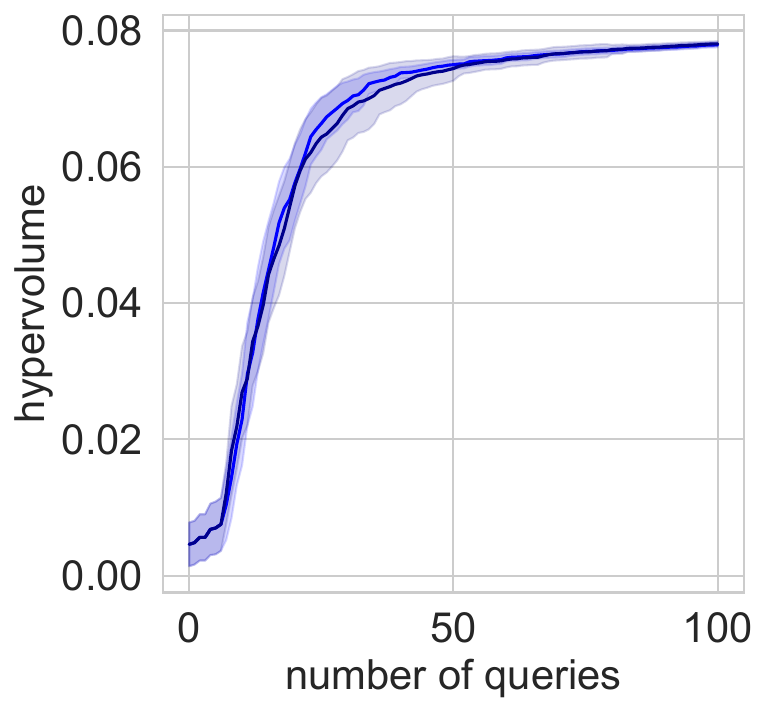}
    \caption{Exoskeleton}
    \label{fig:result3f}
\end{subfigure}

  \caption{Performance of DSTS and modified DSTS (DSTS-M) using queries with $q=2$ alternatives across three test problems. Their performance is almost identical in the three problems considered.
 \label{fig:results3}}
\end{figure*}

\subsection{Results for modified DSTS}
\label{app:dstsm}
We evaluate the performance of the modified version of DSTS used in the proof of Theorem 1 across three of our test problems. We set $x_\mathrm{ref}$ by drawing a point uniformly at random over $\X$. This point is different for each replication. In addition, we set $\delta=0.05$ such that $x_\mathrm{ref}$ is selected five times on average across 100 iterations. The results are shown in Figure~\ref{fig:results3}.

\subsection{Results for $q=4$}
\label{app:q4}
We carry out experiments analogous to those presented in the main paper, using queries with $q=4$ alternatives each. We focus on DSTS and the two strongest benchmarks, qParEGO and qMES. The results are shown in Figure~\ref{fig:results2}. For a clearer comparison, we also include results for $q=2$. As depicted, increasing the number of alternatives improves the performance of the three algorithms. DSTS delivers the best overall performance under queries with $q=2$ and $q=4$ alternatives.

\begin{figure*}[ht]
  \begin{subfigure}{0.336\textwidth}
  \centering
    \includegraphics[width=\linewidth]{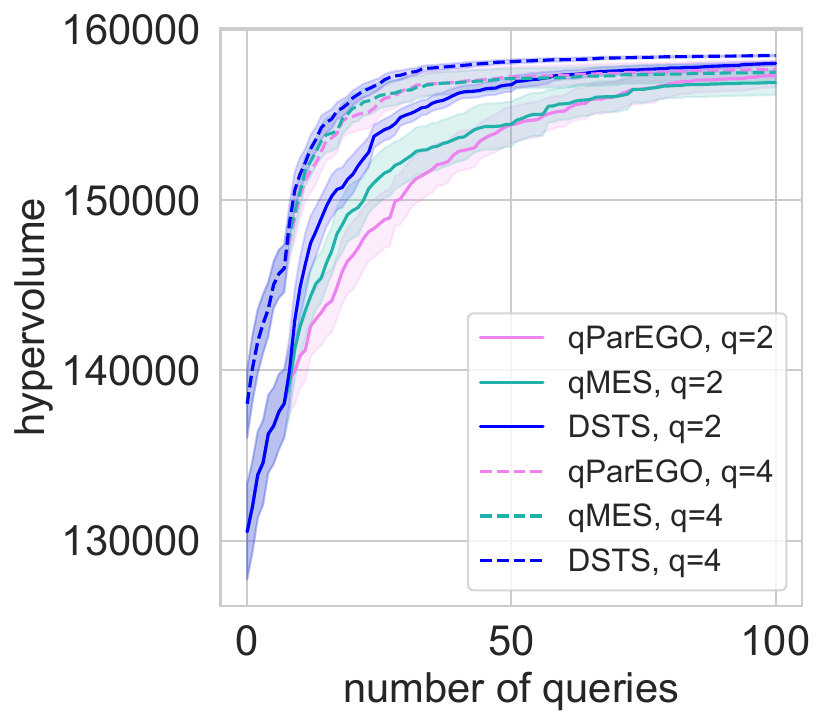}
    \caption{DTLZ1}
    \label{fig:result2a}
  \end{subfigure}%
  \hspace*{\fill}   
  \begin{subfigure}{0.316\textwidth}
    \centering
    \includegraphics[width=\linewidth]{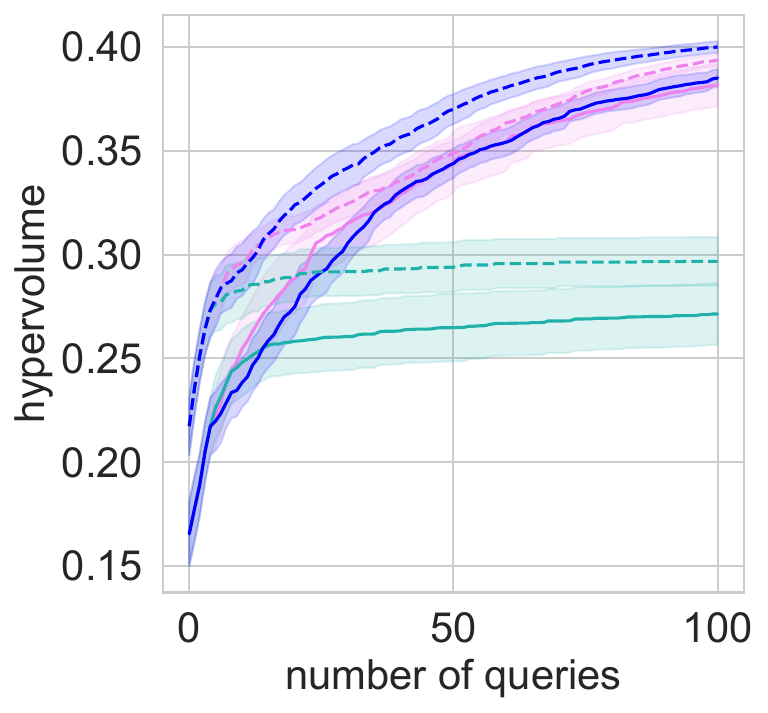}\caption{DTLZ2}
    \label{fig:result2b}
  \end{subfigure}%
  \hspace*{\fill}   
  \begin{subfigure}{0.312\textwidth}
    \centering
    \includegraphics[width=\linewidth]{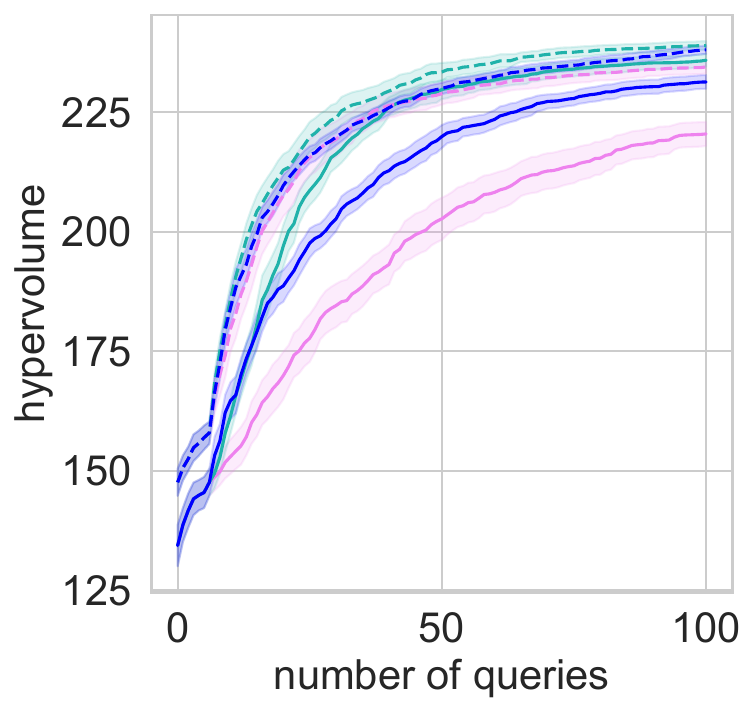}
    \caption{Vehicle Safety}
    \label{fig:result2c} 
  \end{subfigure}\\
  
  \begin{subfigure}{0.33\textwidth}
    \centering
    \includegraphics[width=\linewidth]{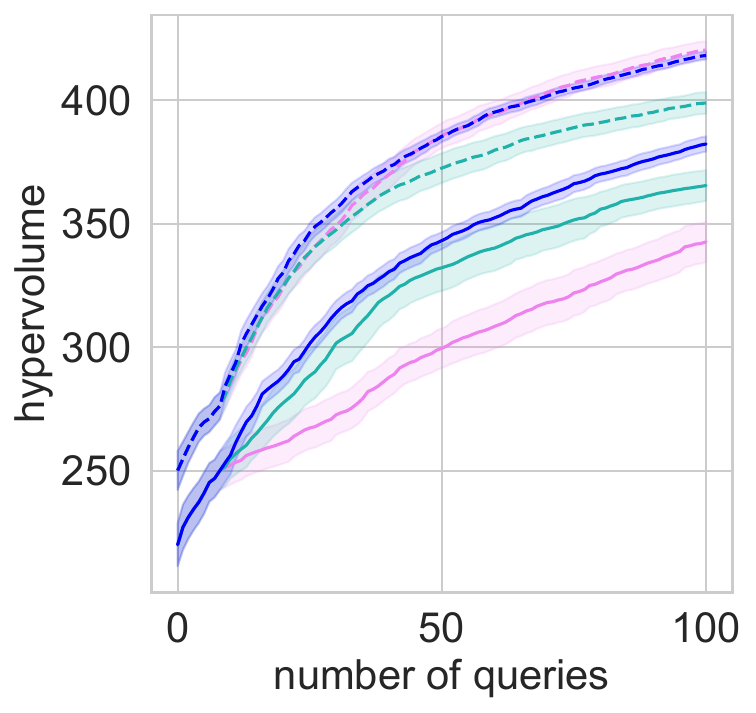}
    \caption{Car Side Impact}
  \end{subfigure}
  \hspace*{\fill}   
  \begin{subfigure}{0.31\textwidth}
  \label{fig:result2d}
    \includegraphics[width=\linewidth]{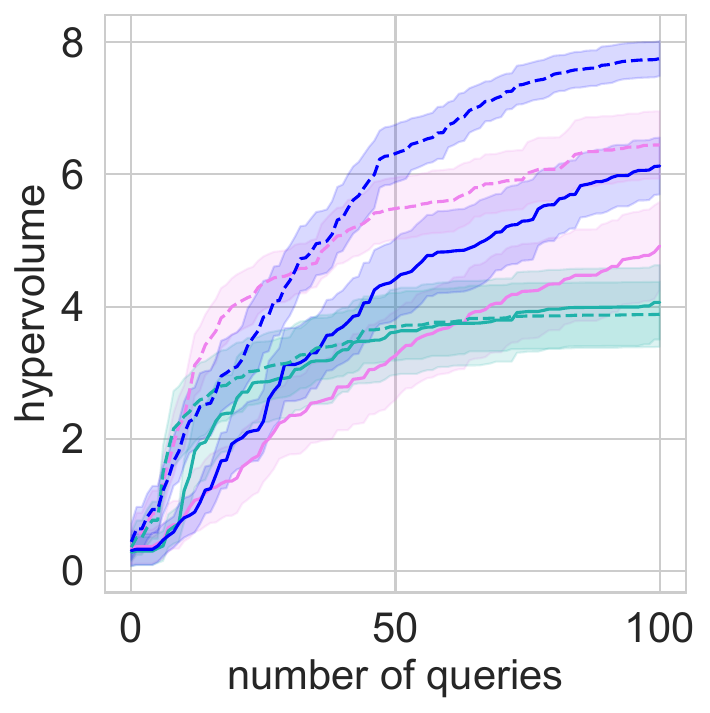}
    \caption{Autonomous Driving}
    \label{fig:result2e}
\end{subfigure}
\hspace*{\fill}   
\begin{subfigure}{0.335\textwidth}
  \label{fig:result2f}
    \includegraphics[width=\linewidth]{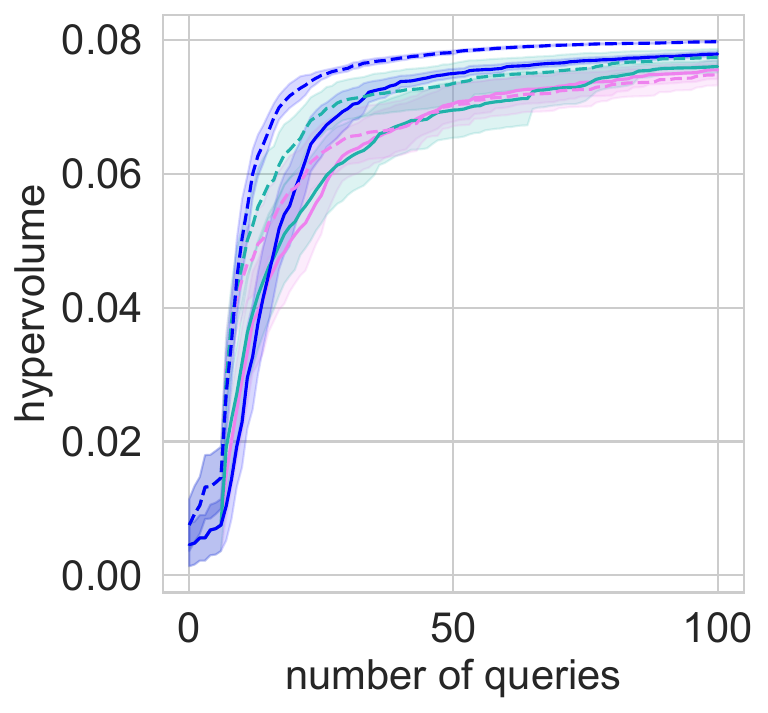}
    \caption{Exoskeleton}
    \label{fig:result2f}
\end{subfigure}

  \caption{Performance of qParEGO, qMES and DSTS using queries with $q=2$ and $q=4$ alternatives accross our six test problems. Using a larger number of alternatives improves the performance of the three algorithms. DSTS delivers the best overall performance under queries with $q=2$ and $q=4$ alternatives.
 \label{fig:results2}}
\end{figure*}

\subsection{Additional details on our benchmarks}
\label{app:benchmakrs}
\subsubsection{Adapted standard multi-objective BO algorithms}
\paragraph{qParEGO} In the standard multi-objective BO setting with observable objectives, the qParEGO acquisition function is defined by $\alpha_n(x) = \E_n[\{s(f(x); \theta_n) - \max_{x'\in\mathcal{X}_n}s(y;\theta_n)\}^+]$, where $\mathcal{X}_n$ is the set of all designs in queries presented to the DM up to time $n$ and $\theta_n$ is drawn uniformly at random over $\Theta$. Following \citet{siivola2021preferential}, we replace $f(x')$ by the posterior mean of $f$ at $x'$ to avoid taking the expectation with respect to the random variable $\max_{x'\in\mathcal{X}_n}s(f(x');\theta_n)$.

\paragraph{qEHVI} In the standard multi-objective BO setting with observable objectives, the qEHVI acquisition function is defined by $\alpha_n(x) = \E_n[\operatorname{HV}(\mathcal{Y}_n \cup \{f(x)\}, r) - \operatorname{HV}(\mathcal{Y}_n, r)]$, where $\mathcal{Y}_n$ is the set of objective vectors corresponding to designs presented to the DM up to time $n$. As with qParEGO, we replace these objective vectors with posterior mean vectors to avoid taking the expectation with respect to these unknown values. Note that qEHVI also requires specifying reference point $r$, which is challenging if the objectives are latent. In our experiments, we set $r$ equal to the coordinate-wise minimum posterior mean value across designs presented to the DM up to time $n$.

\paragraph{qMES} Unlike qParEGO and qEHVI, qMES does not require any modification to be applied in our setting. However, it is worth noting that the lookahead step in the definition of qMES assumes that the objective values will be observed at the selected points. However, this is not the case in our setting. We believe this can be problematic as it may cause qMES to over-value queries constituted by designs with very distinct objective values, even though preference feedback from such queries can often be uninformative.

\subsubsection{PBO with inconsistent preference feedback}

The feedback used by PBO-DTS-IF is produced as follows. At each iteration $n$, a set of scalarization parameters $\tilde{\theta}_n$, is drawn uniformly at random over $\Theta$. Given a query $X_n = (x_{n, 1}, \ldots, x_{n, q}) $ , we assume the DM then provides a noisy response to $\argmax_{i=1,\ldots, q} s(f(x_{n,i}) ; \tilde{\theta}_n)$. Inconsistency arises from sampling different scalarization parameters at every iteration, which, in general, implies that preferences cannot be encoded by a single objective function. We argue this mimics the DM's desire to explore the trade-off between objectives before committing to a solution. At the same time, we note that these responses respect intuitive user behavior, such as preference for queries for which each objective is as large as possible.  

The responses provided by the DM are used to fit a (single-output) Gaussian process with a Logistic likelihood for which posterior inference is carried out using the same approach we use for the other algorithms. New queries are generated using the (single-objective) dueling Thompson sampling strategy under this probabilistic model.  The performance of this method is expected to be poor when the Pareto-optimal set is large, i.e., when the trade-offs between objectives are significant.

\begin{table*}[th]
\centering
\begin{tabular}{l|cccccc}
\toprule
Problem/Algorithm & PBO-DTS-IF & qParEGO & qEHVI & qMES & DSTS \\
\midrule
DTLZ1 &  5.7 &  24.3 & 25.6 & 58.5 & 19.2  \\
DTLZ2 &  6.8 & 16.4 & 10.6 & 16.5 & 13.3   \\
Vehicle Safety &  8.4 & 15.4 & 36.7 & 43.8 & 17.8  \\
Car Side Impact &  5.9 & 38.1 & 352.8 & 324.4 & 36.3  \\
Autonomous Driving  & 5.5 & 34.1 & 102.7 & 72.2  & 32.1 \\
Exoskeleton &  6.3 & 16.5 & 23.7 & 59.3 & 14.6 \\
\bottomrule
\end{tabular}
    \caption{Average runtimes in seconds for all the algorithms compared (except for Random) accross all test problems.
    }
    \label{tab:runtimes}
\end{table*}

\subsection{Runtimes}
\label{app:runtimes}
The average runtimes for all algorithms across all test problems are presented in Table~\ref{tab:runtimes}. DSTS is comparable with qParEGO and faster than qEHVI and qMES. PBO-DTS-IF is the faster algorithm since, in contrast with the other algorithms, it requires a single Gaussian process model instead of $m$. However, it is worth noting that the runtimes of these algorithms could be reduced by parallelizing the training step of its $m$ models. Moreover, the runtimes of DSTS could be further reduced by parallelizing the generation of the $q$ alternatives in each query during the optimization step.

\section{Broader impact}
\label{app:impact}
This paper introduces a framework for preferential Bayesian optimization with multiple objectives, which has the potential to significantly impact society positively. Our approach can benefit areas such as personalized medicine and autonomous driving. However, it is crucial that these advancements are accompanied by societal guardrails to ensure the ethical and safe use of our methodology.

\end{document}